\documentclass{article}

\usepackage[preprint]{neurips_2026}

\usepackage[utf8]{inputenc} 
\usepackage[T1]{fontenc}    
\usepackage{hyperref}       
\usepackage{url}            
\usepackage{booktabs}       
\usepackage{amsfonts}       
\usepackage{nicefrac}       
\usepackage{microtype}      
\usepackage{xcolor}         

\usepackage{microtype}
\usepackage{graphicx}
\usepackage{booktabs} 
\usepackage{multirow} 
\usepackage{enumitem}
\usepackage{hyperref}
\usepackage{algorithm}
\usepackage{algpseudocode} 
\usepackage{graphicx}
\usepackage{subcaption}

\usepackage{amsmath}
\usepackage{amssymb}
\usepackage{mathtools}
\usepackage{amsthm}

\usepackage[capitalize,noabbrev]{cleveref}

\theoremstyle{plain}
\newtheorem{theorem}{Theorem}[section]
\newtheorem{proposition}[theorem]{Proposition}

\theoremstyle{definition}

\theoremstyle{remark}

\usepackage{physics}
\usepackage{wrapfig}
\usepackage{subcaption}
\usepackage{xspace}
\DeclareRobustCommand{\model}{SBFM\xspace}
\usepackage[textsize=tiny]{todonotes}

\title{Accelerated Sequential Flow Matching:\\ A Bayesian Filtering Perspective}

\author{%
  Yinan Huang\\
  Georgia Institute of Technology\\
  \texttt{yhuang903@gatech.edu} \\
  \And
 Hans Hao-Hsun Hsu\\
  Georgia Institute of Technology\\
  \texttt{hans.hsu@gatech.edu}\\
  \And
  Juran Wang\\ 
  Georgia Institute of Technology\\
  \texttt{jwang3668@gatech.edu}\\
  \And
  Bo Dai\\ 
  Georgia Institute of Technology\\
  \texttt{bodai@cc.gatech.edu}\\
  \And 
  Pan Li\\
  Georgia Institute of Technology\\
  \texttt{panli@gatech.edu}\\
}

\begin{document}

\maketitle

\begin{abstract}
  Sequential probabilistic inference from streaming observations requires modeling distributions over future trajectories as new observations arrive. Although diffusion and flow-matching models are effective at capturing high-dimensional, multimodal distributions, their deployment in real-time streaming settings typically relies on repeatedly sampling from a non-informative initial distribution. This results in substantial inference latency, particularly when multiple samples are needed to characterize the predictive distribution. In this work, we introduce \emph{Sequential Bayesian Flow Matching}, a framework inspired by Bayesian filtering. By learning a probability flow that transports the posterior distribution from one time step to the next time step conditioned on new observations, it mirrors the recursive structure of Bayesian belief updates. Crucially, by using the previous belief as an informative source distribution, it enables substantially faster sampling than na\"ive resampling from scratch. Across scientific forecasting tasks spanning accelerator beam spill dynamics, fluid dynamics, and weather forecasting, as well as decision-making benchmarks, our method achieves performance competitive with full-step diffusion on distributional metrics while using far fewer sampling steps, substantially reducing inference latency. Our code is available at \url{https://github.com/Graph-COM/Sequential\_Flow\_Matching}. 
\end{abstract}
\section{Introduction}

Modeling the evolution of real-world dynamic systems from streaming observations is a fundamental problem, with applications in weather forecasting~\citep{kalnay2003atmospheric,lorenz2017deterministic}, fluid dynamics modeling~\citep{bewley2001flow}, magnetic plasma control~\citep{degrave2022magnetic}, and video forecasting for decision making in autonomous systems~\citep{yang2024generalized}. A critical characteristic of these systems is their inherent stochasticity: a single history of observations can admit multiple plausible futures. Deterministic predictions (such as simple regression) are often insufficient, as they tend to average out distinct possibilities, leading to physically invalid or blurred predictions. Moreover, many applications require inferring multi-step future trajectories for monitoring and planning. Ultimately, reliable decision-making requires moving beyond point estimation to maintain a probabilistic belief over the system's future trajectories as new observations arrive.


Probability flow-based models, such as diffusion models~\citep{sohl2015deep, song2019generative, ho2020denoising} and flow matching~\citep{lipman2023flow, liu2023flow}, have become powerful frameworks for modeling high-dimensional, multi-modal distributions. Their ability to capture complex uncertainty and generate high-fidelity samples makes them particularly effective for representing the distribution over long-horizon trajectories. In fact, there is growing interest in applying these models to probabilistic time-series and sequential modeling, including forecasting and state estimation to planning and control~\citep{chen2024diffusion,gao2023prediff,janner2022planning,wei2024diffphycon}.

However, deploying these models in real-time streaming settings remains challenging. Continuously incorporating streaming observations requires rapid updates to avoid delayed estimates. Existing approaches typically re-sample future trajectories from scratch to incorporate new observations, requiring tens to hundreds of network evaluations per update and incurring prohibitive inference latency~\citep{janner2022planning, chen2024diffusion, zhou2025diffusion}. The problem becomes even more acute when multiple samples are needed to characterize the trajectory distribution. Recent efforts seek to mitigate this cost through heuristics such as asynchronous denoising or the reuse of historical predictions~\citep{janner2022planning, duan2025real,li2026step,wei2025cldiffphycon, hoeg2025fast}. However, these approaches emphasize efficient inference pipelines rather than accurate posterior modeling, which can compromise long-horizon predictive performance (see discussion in Sections~\ref{sec:sbfm} and~\ref{sec:exp}).


In this paper, we study flow-based sequential probabilistic inference inspired by Bayesian filtering, and aim to efficiently adapt posterior distributions of future trajectories to new observations. Our central observation is that many streaming tasks exhibit strong \emph{temporal coherence}: the previous posterior $p(x_{t-1}|z_{\le t-1})$ at time $t-1$ is closely aligned with the updated posterior $p(x_t|z_{\le t})$ at time $t$, and thus serves as a natural prior for the next inference step. Exploiting this informative prior can effectively reduce the number of sampling steps required by flow-based models. Motivated by this perspective, we propose \textbf{Sequential Bayesian Flow Matching} (\model), a framework that explicitly parameterizes the posterior update process. Rather than discarding previous generations and resampling from a non-informative base distribution, \model maintains a set of particles representing the current posterior and evolves them as new observations arrive. These particles are transported toward regions consistent with the updated posterior, while modes that become inconsistent with the new observations are progressively filtered out, as illustrated later in Figure~\ref{fig:particle_traj}. This particle-based view is closely related to particle flow filtering~\citep{daum2010exact, daum2011particle, khan2014non}. The key distinction is that our flow is learned directly from data rather than derived from explicit system dynamics and observation models.

However, directly learning posterior-to-posterior flows from raw data requires some caution. A tempting idea is to train a flow matching model to transport between successive states $(x_{t-1},x_t)$ from the observed trajectory conditioned on the observations $z_{\le t}$. But this construction does not faithfully realize Bayesian filtering: once
conditioned on $z_{\le t}$, the state $x_{t-1}$ taken from an observed trajectory is effectively a sample from $p(x_{t-1}|z_{\le t})$, not from the posterior $p(x_{t-1}|z_{\le t-1})$ before assimilating the new observation at time $t$. This distributional mismatch is discussed formally in Appendix~\ref{appx:proof} and ablated empirically in Appendix~\ref{appx:ablation}.

To address this, we propose a simple pretraining-finetuning strategy that
adapts a pretrained Gaussian-to-posterior flow model to learn posterior-to-posterior
transports. We use the pretrained model to generate source samples from
the previous posterior $p(x_{t-1}|z_{\le t-1})$, and finetune \model (initialized from the pretrained weights) to transport these samples to the
ground-truth samples $x_t$.  
This strategy echoes the common practice of finetuning pretrained diffusion models for
sampling acceleration, such as progressive diffusion distillation and consistency
distillation~\citep{salimans2022progressive,song2023consistency}, although our
goal is to learn sequential belief updates rather than to compress the original
Gaussian-to-target sampling path.

We first evaluate our method on scientific applications. As the primary scientific driver, we consider beam spill dynamics in the Mu2e particle accelerator system~\citep{whitbeck2025fast}, where particles are extracted over time and the beam intensity evolves stochastically, requiring real-time, ultra-low-latency monitoring. Our method improves distributional accuracy in forecasting future beam intensity while substantially reducing inference latency through fewer sampling steps. Similar efficiency gains are observed on fluid dynamics and weather forecasting tasks. We further demonstrate the applicability of \model to planning and control settings, where it likewise enables low-latency closed-loop inference.




\section{Preliminary: Flow Matching and Diffusion Models}
\begin{table*}[t!]
\caption{Examples of sequential probabilistic inference $p(x_t|z_{\le t})$. Here $x_t$ is the abstract predictive variable, $z_t$ is the observation and $s_t$ is the physical state.}
\label{tab:framework}
\centering
\resizebox{\linewidth}{!}{
\begin{tabular}{llll}
\hline
Task             & Predictive Variable                      & Observation          & Inference Goal                                                 \\ \hline
Forecasting      & $x_t=s_{t+1:t+H}$                & $z_t=s_t$            & $p(s_{t+1:t+H}|s_{\le t})$                           \\
Planning   & $x_t=(s_{t+1:t+H}, a_{t:t+H-1})$ & $z_t=(s_t, a_{t-1})$ & $p(s_{t+1:t+H}, a_{t:t+H-1}|s_{\le t}, a_{\le t-1})$ \\ 
State Estimation & $x_t=s_t$                        & $z_t\sim p(z_t|s_t)$ & $p(s_t|z_{\le t})$                                   \\\hline
\end{tabular}
}
\vspace{-0.6em}
\end{table*}
The idea of flow matching is to learn an ordinary differential equation (ODE) $\frac{d}{d\tau}x(\tau) = v(x(\tau), \tau)$ ($0\le \tau\le 1$) that transports between the source distribution $x(1)\sim p_1$ and the target distribution $x(0)\sim p_0$~\citep{lipman2023flow, liu2023flow}. To avoid confusion with physical time $t$, throughout the paper we use the symbol $\tau$ to denote the virtual time variable (or noise level) in flow matching. To construct such velocity field $v(x(\tau),\tau)$, we define an interpolation path $x(\tau)=\alpha(\tau) x(0)+\sigma(\tau) x(1)$, where $\alpha(\tau),\sigma(\tau)$ are coefficients satisfying end-point constraints $\alpha(0)=\sigma(1)=1$ and $ \alpha(1)=\sigma(0)=0$. By defining conditional velocity $\dot{x}(\tau) = \dot{\alpha}(\tau)x(0)+\dot{\sigma}(\tau)x(1)$, it is shown that the velocity $v(x(\tau),\tau)$ can be expressed by $v(x(\tau),\tau)=\mathbb{E}(\dot{x}(\tau)|x(\tau))$, and can be learned by a neural network $v_{\theta}(x(\tau),\tau)$ through optimization: $
    \min_{\theta} \mathbb{E}_{(x(0),x(1))\sim \pi, \tau\sim  \text{Uniform}(0,1)}\norm{v_{\theta}(x(\tau),\tau)-\dot{x}(\tau)}^2,$
where $\pi$ is a coupling of $p_0,p_1$, i.e., $\pi$ represents a joint distribution of $(x(0),x(1))$ whose marginals are $p_0,p_1$. 
To sample from target distribution $p_0$, we solve the ODE $dx(\tau)/d\tau=v_{\theta}(x(\tau),\tau)$ starting from source distribution $x(1)\sim p_1$ that is easy to sample from. The probability flow ODE counterpart of Diffusion models can be viewed as a special case of flow matching with $p_1=\mathcal{N}(0, I)$~\citep{song2021scorebased}. 
\section{Sequential Bayesian Flow Matching}
\subsection{Problem Setup and Bayesian Filtering}
\label{sec:bayesian_filtering}

 We study a sequential probabilistic inference problem, i.e., modeling the evolving posterior distributions with new observations arriving over time. Formally, let $x_t$ be a random variable to be predicted at the physical time $t$, and let $z_t$ be the observation received at time $t$. Given streaming observations $z_{\le t}:=(z_1,z_2,...,z_t)$, the goal is to infer the posterior distribution $p(x_t|z_{\le t})$, over $t=1,2,...,T$.

Many streaming tasks can be viewed as special instances of this formulation. A particularly interesting case we consider is when $x_t$ represents a \emph{long-horizon future trajectory}. For example, let $s_t$ denote the real system state at time $t$. In online forecasting, the task is to infer $H$-step future states $x_t=s_{t+1:t+H}$ with historical states $z_{\leq t}$ where $z_i =s_{i}$. In planning with generative models~\citep{janner2022planning}, the goal is to infer both future states and the actions to take $x_t = (s_{t+1:t+H},a_{t:t+H-1})$ conditioned on historical states and actions $z_{\leq t}$ where $z_{i}=(s_i, a_{i-1})$. Table~\ref{tab:framework} summarizes these representative examples.


\textbf{Bayesian filtering.} Bayesian filtering provides a principled framework to recursively update posterior distribution $p(x_t|z_{\le t})$ using new accumulated observations $z_{\le t}$. Concretely, by Bayesian rule, the posterior $p(x_t|z_{\le t})$ satisfies the key recursive formula (\emph{Note that the process may not be Markovian}):
\begin{align}
    &\boldsymbol{p(x_t|z_{\le t})}\propto p(z_t|x_{t}, z_{\le t-1})p(x_t|z_{\le t-1}), \label{eq:bayesian_filtering}\\
    &p(x_t|z_{\le t-1})\!=\!\int \!p(x_t|x_{t-1},z_{\le t-1})\boldsymbol{p(x_{t-1}|z_{\le t-1})}dx_{t-1}.  \label{eq:bayesian_filtering2}
\end{align}
Equations~\eqref{eq:bayesian_filtering},\eqref{eq:bayesian_filtering2} highlight a key structural property: the current posterior $p(x_{t}|z_{\le t})$ can be obtained by propagating the previous posterior $p(x_{t-1}|z_{\le t-1})$ via the dynamics $p(x_t|x_{t-1}, z_{\le t-1})$ and a correction using new observation $z_t$. This exact formulation is often intractable due to unknown system dynamics. Next, we abstract this process as a distributional-level operator:
\begin{equation}
\label{eq:bayesian_filtering_operator}
    p(x_t|z_{\le t})=\text{Bayesian\_filtering}(p(x_{t-1}|z_{\le t-1}); z_{\le t}).
\end{equation}

In practice, the posterior distribution $p(x_t|z_{\le t})$ is usually represented by a set of particles (samples) $\{x_t^{(i)}\}_{i=1,2,...,N}$. Particle filtering provides a concrete realization of the recursive update in Eq.~\eqref{eq:bayesian_filtering_operator} by evolving this set of particles over time. For example, particles are propagated according to system dynamics and adjusted based on new observations through weighting and resampling~\citep{gordon1993novel, smith2013sequential}.

\begin{figure*}[t!]
    \centering
    \includegraphics[width=1\linewidth]{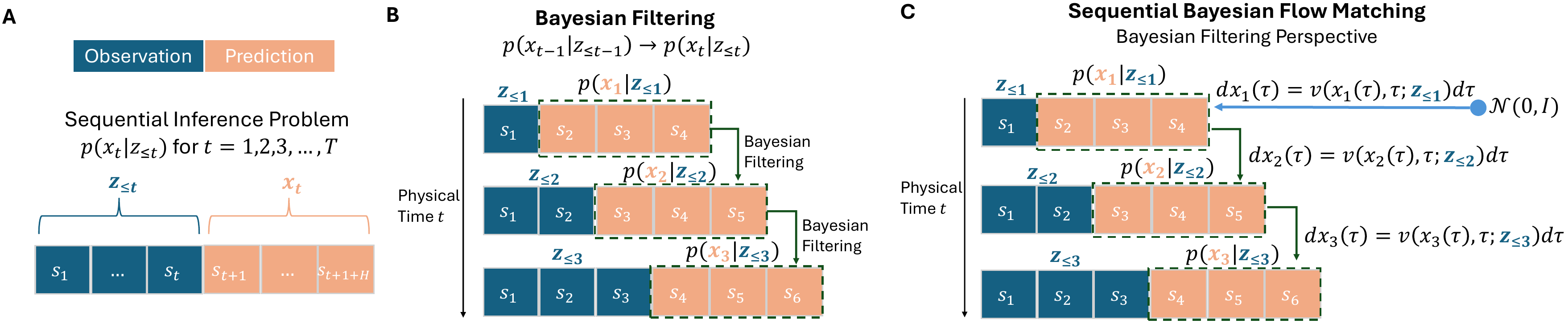}
    \caption{ \textbf{A}: Streaming forecasting in sequential probabilistic inference framework. 
    \textbf{B}: Bayesian filtering framework refines previous particles (samples) based on latest observations to obtain new particles. 
    \textbf{C}: Following the idea of Bayesian filtering,
    \model leverages a probability flow to recursively transport from previous belief $p(x_{t-1}|z_{\le t-1})$ to current belief $p(x_t|z_{\le t})$.}
    \label{fig:bf}
    \vspace{-6pt}
\end{figure*}
\subsection{Sequential Inference with Flow-based Models: Existing Practices}
\label{sec:existing}
A common approach to posterior inference of $p(x_t|z_{\le t})$ is to treat each time step as a conditional generation problem and model $p(x_t| z_{\le t})$ with a flow-based generative model. Specifically, the model transports samples from a fixed base distribution (e.g., $\mathcal{N}(0, I)$) to the target distribution $p(x_t|z_{\le t})$ conditioned on $z_{\le t}$. Concretely, we consider the following ODE: 
\begin{equation}
\label{eq:ode_pretrain}
\frac{d}{d\tau}x_t(\tau)
= v(x_t(\tau), \tau; z_{\le t}), 
\qquad
x_t(0)\sim p(x_t|z_{\le t}), \quad
x_t(1)\sim \mathcal{N}(0,I).
\end{equation}
where $\tau \in [0,1]$ denotes the flow-matching time and $t$ denotes the physical time step. Learning the velocity field and solving the ODE allow us to sample from $p(x_t \mid z_{\le t})$.

To deploy such a model for sequential inference, one typically applies the conditional generative model repeatedly over physical time $t$. This receding-horizon paradigm is widely used in time-series forecasting~\citep{box2015time} and decision-making systems~\citep{garcia1989model}. For flow-based models, however, this naive strategy requires restarting sampling and solving the ODE in Eq.~\eqref{eq:ode_pretrain} from scratch whenever a new observation arrives, often using tens or more solver steps~\citep{janner2022planning}. This incurs substantial latency and may fail to keep pace with the observation rate in real-time systems. This issue is further amplified when multiple particles are needed to represent the distribution over future trajectories. 

\subsection{Sequential Bayesian Flow Matching} 
\label{sec:sbfm}



Motivated by the recursive structure of Bayesian filtering, we propose \emph{Sequential Bayesian Flow Matching} (\model), a simple framework that directly parameterizes a data-driven analogue of the filtering process in Eq.~\eqref{eq:bayesian_filtering_operator} as a probability flow. The key idea is to replace the non-informative Gaussian source with the previous posterior $p(x_{t-1}|z_{\le t-1})$ in the flow ODE:
\begin{equation}
\label{eq:ode_sfm}
    \frac{d}{d\tau}x_t(\tau) = v(x_t(\tau), \tau;z_{\le t}),\qquad
    x_{t}(0)\sim p(x_t|z_{\le t}), \quad \boldsymbol{x_t(1)\sim p(x_{t-1}|z_{\le t-1})}.
\end{equation} 
By solving this ODE from $\tau=1$ to $0$, the flow model transports from the previous posterior to the current one. Figure~\ref{fig:bf} C illustrates the idea of \model.

\textbf{Informative prior and warm-start.} A key advantage of this approach is that, the previous posterior already encodes substantial information about the
next posterior in temporally coherent physical
system. It therefore provides an more informative source distribution
than an independent Gaussian. This warm-start reduces the gap that the flow needs to traverse, leading to smoother transport dynamics that are easier to learn and more \emph{efficient inference} with fewer flow sampling steps~\citep{ren2025prior,scholz2025warm, park2024leveraging}.

\textbf{Simple but not merely heuristic.} While \model is simple in form, it differs from existing warm-start methods by directly learning the posterior update rather than applying an ad hoc reuse strategy. If the learned transport accurately matches the true conditional posterior update, it recovers the desired recursive belief updates. Existing strategies typically adopt inference-time heuristics without an explicit objective for posterior-update modeling. One common approach perturbs previous samples and denoises them under new observations using the original diffusion model~\citep{janner2022planning, duan2025real, li2026step}, but that model is still trained for Gaussian-to-target generation, not posterior-to-posterior transport. 
Another line of work trains asynchronous denoising diffusion models, which maintain future trajectories at different noise levels along the prediction horizon and progressively denoise them as new observations arrive~\citep{wei2025cldiffphycon, hoeg2025fast}. However, because future states are intentionally kept partially denoised, the resulting estimates can remain noisy. In contrast, \model learns the posterior-to-posterior flow induced by new observations, which also leads to a different particle-based mechanism for distribution modeling.

\textbf{Distribution modeling.} Modeling the full posterior distribution, rather than a
single point estimate, is crucial for uncertainty quantification and risk-sensitive downstream forecasting and decision making~\cite{ghahramani2015probabilistic, gneiting2014probabilistic, mesbah2016stochastic}. Standard diffusion models draw $N$ particles (samples) from Gaussian noise and denoise under conditions $z_{\le t}$, resulting in a particle set $\{x_t^{(i)}\}_{i=1,2,...,N}$ representing $p(x_t|z_{\le t})$~\citep{rasul2021autoregressive, yan2021scoregrad}. This procedure regenerates the particle set from
scratch after every new observation.

\model instead maintains and evolves a set of particles $\{x_t^{(i)}\}_{i=1,2,...,N}$ over time. When a new observation $z_{t+1}$ arrives, each previous particle $x_t^{(i)}$ is evolved according to the flow ODE Eq.~\eqref{eq:ode_sfm} conditioned on $z_{\le t+1}$, resulting in an updated particle set $\{x^{(i)}_{t+1}\}_{i=1,2,...,N}$ that approximates new posterior $p(x_{t+1}|z_{\le t+1})$. Figure \ref{fig:particle_traj} illustrates the particle evolution of \model on a synthetic dataset. Particles are moved away from low-probability regions toward regions consistent with the new observations, and old modes inconsistent with new observations are progressively filtered out.

\textbf{Connection to particle filtering.} This particle-evolution view is closely connected to particle-based Bayesian filtering. The classic particle filtering, or Sequential Monte Carlo~\cite{gordon1993novel, smith2013sequential}, also maintains a set of particles to approximate the posterior. The particles are propagated via system dynamics, re-weighted according to new observations and then re-sampled. The importance weighting is known to suffer from particle degeneracy in high-dimensional settings, where the posterior mass concentrates
on only a few particles and most particles receiving negligible weights are discarded~\citep{smith2013sequential}. Particle flow filtering is later developed to mitigate this issue by directly constructing a probability flow that bridges between posteriors~\citep{daum2010exact, daum2011particle, khan2014non}. Particles are allowed to move to high-density regions rather than being discarded, similar to the mechanism of \model shown in Figure~\ref{fig:particle_traj}. These classic approaches derive flow dynamics from explicit system and observation models, while our method learns the velocity field directly from data without requiring knowledge of the underlying system.

\begin{figure}[t]
\vspace{-15pt}
  \centering
  \begin{subfigure}[t]{0.57\columnwidth}
    \centering\includegraphics[width=\linewidth]{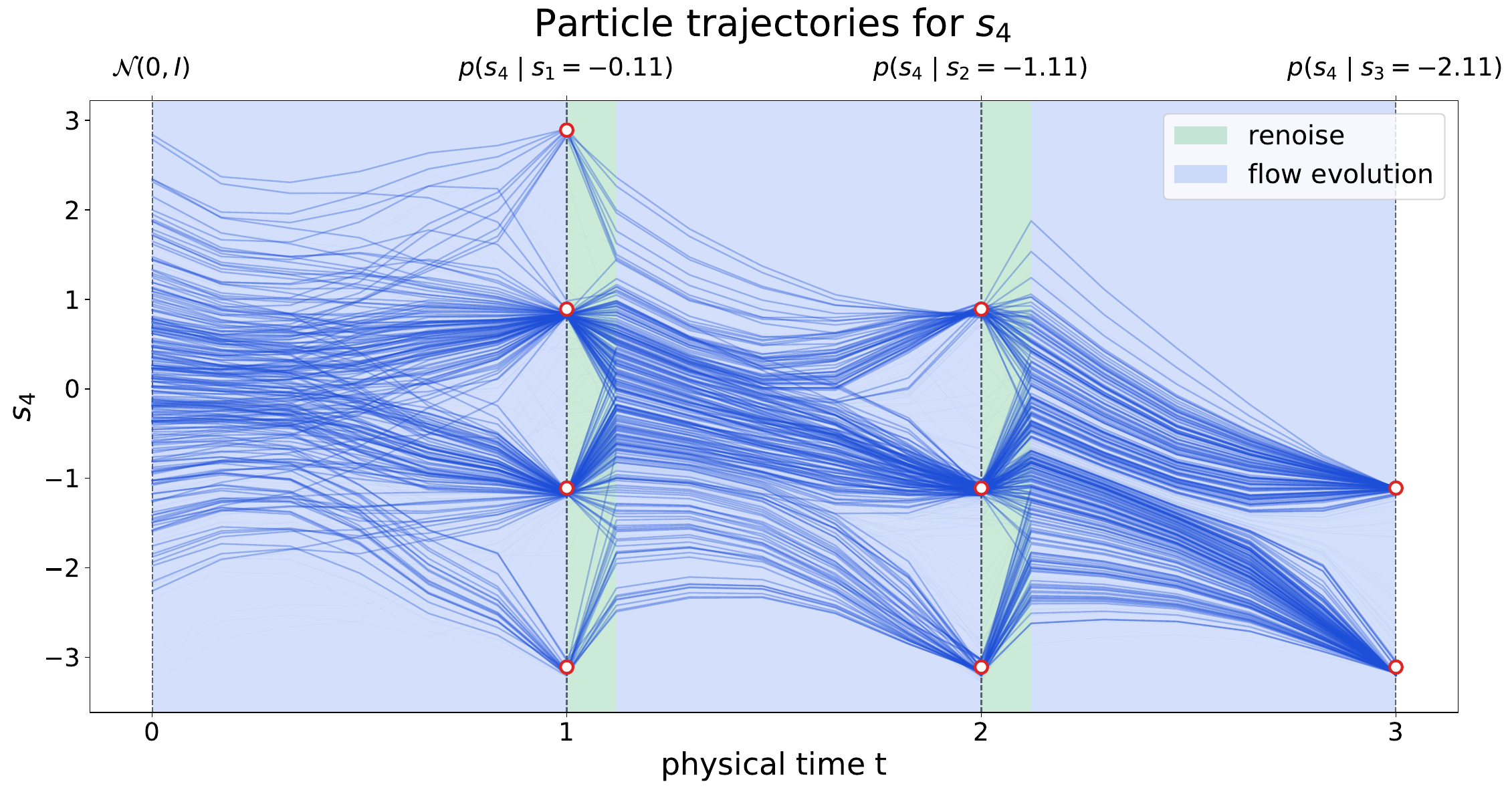}
    \caption{Particle evolution of $p(s_4|s_t)$ by \model
    }
    \label{fig:particle_traj}
  \end{subfigure}
  \hfill
  \begin{subfigure}[t]{0.42\columnwidth}
    \centering
\includegraphics[width=\linewidth]{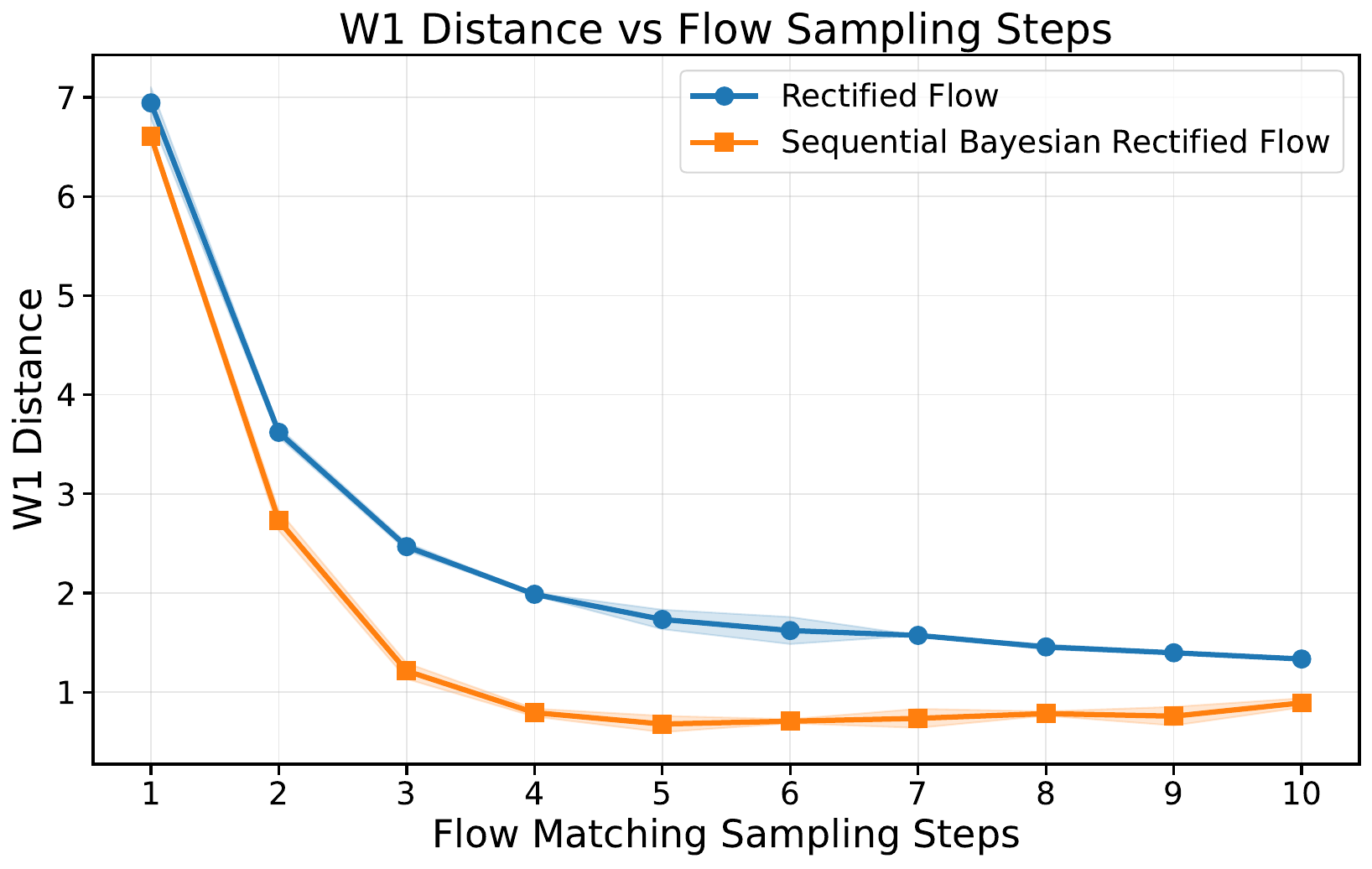}
    \caption{Quantitative comparison}
  \label{fig:w1_dist}
  \end{subfigure}
  \caption{Forecasting on a Bernoulli random walk
$p(s_{t+1}|s_t)=\tfrac{1}{2}\delta_{s_t+1}+\tfrac{1}{2}\delta_{s_t-1}$,
with $s_1\sim\mathcal{N}(0,10^{-4})$. The task is to infer the posterior trajectory distribution
$p(s_{t+1:t+5}|s_t)$ for $t=1,\ldots,4$. Each marginal $p(s_{t+\Delta}|s_t)$ has $(\Delta+1)$ discrete modes.
(a) Evolution of 500 \model particles for the marginal state $s_4$. As each new observation $s_t$ arrives, \model transports particles from $p(s_4| s_{t-1})$ to $p(s_4|s_t)$ following the flow ODE Eq.~\eqref{eq:ode_sfm}. Red circles mark ground-truth modes. Inconsistent modes are progressively filtered out.
(b) Average 1-Wasserstein distance between generated and ground-truth distributions versus flow sampling steps. \model outperforms the full-step standard flow model, using only three flow sampling steps. Training details are in Appendix~\ref{appx:exp}.}
  \label{fig:syn}
  \vspace{-6pt}
\end{figure}

\subsection{Training and Inference Algorithms}
\label{sec:sfm_algo}

We discuss how to learn \model  (Eq.~\eqref{eq:ode_sfm}) via flow matching. The flow matching objective requires pairs of samples $(x_{t-1},x_t)$ as well as condition $z_{\le t}$. Suppose we have an offline dataset of state-observation trajectories $\{(x^{*, m}_{1:T}, z^{*, m}_{1:T})\}_{m=1,2,...,M}$. For notational simplicity, we omit the trajectory index,
denoting a general ground-truth trajectory from the dataset as $(x^*_{1:T}, z^*_{1:T})$.


\textbf{Training Data Construction.}
A na\"ive approach is to draw successive ground-truth states $(x^*_{t-1}, x^*_t)$ together with observations $z^*_{\le t}$ from the offline dataset, and train a conditional flow matching to transport $x^*_{t-1}$ to $x^*_t$ conditioned on $z^*_{\le t}$. However, this construction introduces a fundamental distributional mismatch. Since the data $(x_{t-1}^*, x_t^*, z_{\le t}^*)$ are drawn from the joint distribution $p(x_{t-1}, x_t, z_{\le t})$, the source sample $x_{t-1}^*$ exposed to the conditional flow model $v_{\theta}(\cdot|z^*_{\le t})$ has marginal distribution $p(x_{t-1}|z^*_{\le t})$. However, Bayesian filtering recursion requires the previous posterior as $p(x_{t-1}|z^*_{\le t-1})$, without observing $z_t^*$.  As a result, the learned flow matching is not aligned with the posterior-to-posterior flow ODE Eq.~\eqref{eq:ode_sfm}. A formal analysis of this mismatch is in Appendix~\ref{appx:proof} and this training approach empirically leads to a degraded performance shown in Appendix~\ref{appx:ablation}.



To address this mismatch, we use a simple pretraining-finetuning framework that
adapts a pretrained diffusion or flow model to learn the posterior-to-posterior
transport. In the pretraining stage, we train a standard flow-based model $v_{\mathrm{pre}}$ that transports Gaussian noise to $x_t^*$
conditioned on $z_{\le t}^*$ (Eq.~\eqref{eq:ode_pretrain}). In the finetuning
stage, we freeze the pretrained model and use it to generate source samples that
match the filtering recursion. Specifically, we (a) draw a ground-truth
state-observation pair $(x_t^*, z_{\le t}^*)$, and (b) sample
$x_{t-1}^{\mathrm{pre}} \sim
p_{\mathrm{pre}}(x_{t-1}|z_{\le t-1}^*)$ from the pretrained model
conditioned only on the previous observations. We then initialize
\model $v_\theta$ with pretrained model weights and finetune it to transport
$x_{t-1}^{\mathrm{pre}}$ to $x_t^*$ conditioned on new observations
$z_{\le t}^*$. This makes the source samples
approximately follow the previous posterior, so
the finetuned model learns the desired posterior update.



\textbf{Re-noising mechanism. } 
As the pretrained model only approximates the true distribution, discrepancies between the training and test-time source distributions can lead to error accumulation over successive belief updates. To mitigate the errors, we introduce a re-noising mechanism that injects controlled noise into the previous estimate $x_{t-1}$ during both training and inference. Concretely, in filtering flow ODE Eq.~\eqref{eq:ode_sfm}, we replace the source distribution $p(x_{t-1}|z_{\le t-1})$ with a noisy version $\tilde{x}_{t-1} := \alpha(\tau_{\text{renoise}}) x_{t-1} + \sigma(\tau_{\text{renoise}})\cdot \mathcal{N}(0, I)$, where $\alpha(\tau_{\text{renoise}})$ and $\sigma(\tau_{\text{renoise}})$ are the coefficients of the probability flow path at flow time $\tau_{\text{renoise}}$. We treat $\tau_{\text{renoise}}$ as a hyperparameter that can be tuned.

The re-noise level $\tau_{\text{renoise}}$ controls the trade-offs between preserving historical posterior's information for sampling acceleration and accommodating uncertainty during belief refinement. If $\tau_{\text{renoise}}=1$, we have $\tilde{x}_{t-1}\sim \mathcal{N}(0,I)$, which reduces to the restarting-sampling paradigm that requires a large number of flow sampling steps. If $\tau_{\text{renoise}}=0$, it relies on a precise modeling of $x_{t-1}\sim p(x_{t-1}|z_{\le t-1})$, leading to risk of error accumulation. Appendix~\ref{appx:ablation} studies the impact of $\tau_{\text{renoise}}$.


\textbf{Flexible particle set size.}
The number of particles $N$ controls the granularity of the posterior representation and can be adjusted according to the latency budget in practice. For example, the particle set can be expanded by branching existing particles via independent re-noising ($x_t^{(i)} \rightarrow \{\tilde{x}_t^{(i,1)}, \tilde{x}_t^{(i,2)},\ldots\}$), or by drawing fresh particles from the pretrained model when additional latency is acceptable. In our experiments, we keep $N$ fixed and use \model for all sequential updates. 

Algorithms~\ref{alg:sfm_train} and~\ref{alg:sfm_inference} present the finetuning and inference pipelines. At inference, the initial particles $x_1^{(i)}$ are generated by the pretrained flow model since no previous particles are available.

\begin{algorithm}[t]
\caption{Sequential Bayesian Flow Matching Finetuning}
\label{alg:sfm_train}
\begin{algorithmic}[1]
\Require Ground-truth $x_t$, context $z_{\le t}$, renoise level $\tau_{\text{renoise}}$, and a pretrained flow model ${v_{\text{pre}}}$ 
    \State Generate $x^{\text{pre}}_{t-1}\sim p_{{\text{pre}}}(x_{t-1}|z_{\le t-1})$ using $v_{\mathrm{pre}}$ (solve ODE Eq.~\eqref{eq:ode_pretrain})
    \State Renoise $\tilde{x}_{t-1}= \alpha(\tau_{\text{renoise}})\cdot x_{t-1}^{\text{pre}}+\sigma(\tau_{\text{renoise}})\cdot \mathcal{N}(0,I)$
    \State Sample interpolation time $\tau\sim \text{Uniform}(0,1)$
    \State Compute $x_t(\tau)=\alpha(\tau)x_t+\sigma(\tau)\tilde{x}_{t-1}$ 
\State \Return loss $L(\theta)=\norm{v_{\theta}(x_t(\tau), \tau;z_{\le t}) - \dot{x}_t(\tau)}^2$
\end{algorithmic}
\end{algorithm}

\begin{algorithm}[t]
\caption{Sequential Bayesian Flow Matching Inference}
\label{alg:sfm_inference}
\begin{algorithmic}[1]
\Require Pretrained flow model $v_{\text{pre}}$, \model $v_\theta$ finetuned by Algorithm~\ref{alg:sfm_train}, initial observation $z_1$, particle size $N$
\State Initialize particle set $\{x_1^{(i)}\}_{i=1,2,...,N}$ with    $x^{(i)}_1\overset{\scriptstyle \text{i.i.d.}}{\sim} p_{{\text{pre}}}(x_1|z_1)$ (solve ODE Eq.~\eqref{eq:ode_pretrain})
\For{$t = 2$ to $T$}
     \State Receive new observation $z_t$
     \For{$i=1$ to $N$}{\hspace{5pt}(run in parallel)}
     \State $\tilde{x}^{(i)}_{t-1}= \alpha(\tau_{\text{renoise}})\cdot x^{(i)}_{t-1}+\sigma(\tau_{\text{renoise}})\cdot \mathcal{N}(0, I)$
     \State Solve $dx^{(i)}_t(\tau)/d\tau=v_{\theta}(x^{(i)}_t(\tau),\tau;z_{\le t})$ from $x^{(i)}_t(1)=\tilde{x}^{(i)}_{t-1}$, and obtain $x^{(i)}_t(0)$
     \EndFor
     \State Obtain new particle set $\{x_t^{(i)}\}_{i=1,2,...,N}$ by assigning $x_t^{(i)}=x_t^{(i)}(0)$
\EndFor
\end{algorithmic}
\end{algorithm}


\section{Related Works}
\label{sec:related}
\textbf{Flow-based models for trajectory generation.}
Diffusion and flow matching models have been widely used for trajectory generation in an offline setting.
Existing approaches mainly differ in how denoising is scheduled across the prediction horizon, including full-sequence denoising~\citep{li2022diffusion, ho2022video}, autoregressive denoising~\citep{hoogeboom2021autoregressive, rasul2021autoregressive} and asynchronous denoising~\citep{pmlr-v235-ruhe24a, wu2023ar, chen2024diffusion}. Asynchronous denoising allows assigning and denoising variables of different noise levels along the prediction axis.
These methods are primarily designed for static trajectory generation and do not account for streaming observations. When applied in streaming settings, they rely on repeated sampling from scratch.

\textbf{Efficient sampling of flow-based models.}
Prior work accelerates sampling by applying advanced numerical solvers~\citep{lu2022dpm, zhangfast} or distillation-based methods that compress the sampling path of a teacher model into a fewer-step student model~\citep{salimans2022progressive, yin2024one, geng2023one}.
More recently, flow map matching methods directly learn the solution operator of the flow ODE and enable few-step generation~\citep{song2023consistency, boffi2025flow, geng2025mean}. However, the training objective is known to be unstable to optimize~\citep{lusimplifying, geng2025improved}. In our streaming experiments, consistency models~\citep{geng2025consistency} and MeanFlow~\citep{geng2025mean} were difficult to optimize reliably and generalized poorly. Extending such one-step generative models to streaming tasks is a nontrivial direction, and could potentially be combined with our framework in future work.

\textbf{Bayesian inference of diffusion models.} Diffusion priors have been studied for Bayesian inverse problems, where the goal is to infer $p(x|y)$ from measurements $y=f(x)$ and a diffusion prior $p(x)$~\citep{cardoso2024monte, chung2023diffusion, dou2024diffusion, daras2024survey}. These approaches mainly focus on static inference with single-shot measurements. While recent work has begun to explore flow-based data assimilation~\citep{chen2025flowdas}, sequential probabilistic inference with diffusion or flow priors remains underexplored, especially in low-latency streaming settings. Crucially, most approaches rely on likelihood-based guidance, requiring explicit, often linear, observation models and system dynamics that are unavailable in the real world.

\section{Experiment}
\label{sec:exp}

\begin{figure}[t]
\vspace{-16pt}
\centering\begin{minipage}[t]{0.49\linewidth}
\vspace{0pt}
    \centering
    \captionof{table}{Results of beam spill forecasting. Prediction horizon $H=300$.}
    \label{tab:spill}
    \resizebox{\linewidth}{!}{
    \begin{tabular}{lccc}
        \toprule
        \textbf{Method} & NFE & Energy Score $\downarrow$ & RMSE $\downarrow$ \\
        \midrule
        Autoregressive & $H$ & $1.2509_{\pm 0.0013}$ & $2.199_{\pm 0.021}$  \\
        Diffusion & 5 & $0.0196_{\pm 0.0001}$ & $\textbf{0.051}_{\pm 0.002}$ \\
        Rectified Flow & 5 & $0.0231_{\pm 0.0003}$ & $0.072_{\pm 0.004}$ \\
        Diffusion & 1 & $0.0220_{\pm 0.0001}$  & $0.062_{\pm 0.002}$ \\
        Rectified Flow & 1 & $0.0302_{\pm 0.0007}$  & $0.115_{\pm 0.009}$  \\
        MeanFlow & 1 & $0.1780_{\pm 0.0022}$ & $0.849_{\pm 0.280}$  \\
        Consistency Model & 1 & $0.0269_{\pm 0.0007}$ & $0.155_{\pm 0.044}$  \\
        \midrule
        Asynchronous Diffusion & 1 & $0.0336_{\pm 0.0009}$ & $0.358_{\pm 0.011}$  \\
        Warm-start Diffusion & 1 & $0.0197_{\pm 0.0002}$ &  $0.053_{\pm 0.002}$ \\
        Warm-start Diffusion & 2 & $0.0203_{\pm 0.0001}$ &  $0.054_{\pm 0.001}$ \\
        Warm-start Rectified Flow & 1 & $0.0236_{\pm 0.0001}$  & $0.076_{\pm 0.001}$ \\
        Warm-start Rectified Flow & 2 & $0.0239_{\pm 0.0002}$  & $0.083_{\pm 0.001}$ \\
        \midrule
        \textbf{\model-Rectified Flow} & 1 & $0.0179_{\pm 0.0003}$ & $0.054_{\pm 0.001}$ \\
        \textbf{\model-Rectified Flow} & 2 & $\textbf{0.0170}_{\pm 0.0002}$ & $\textbf{0.050}_{\pm 0.002}$ \\
        \bottomrule
    \end{tabular}
    }
\end{minipage}
\hfill
\begin{minipage}[t]{0.49\linewidth}
    \vspace{0pt}
    \centering
    \includegraphics[width=\linewidth]{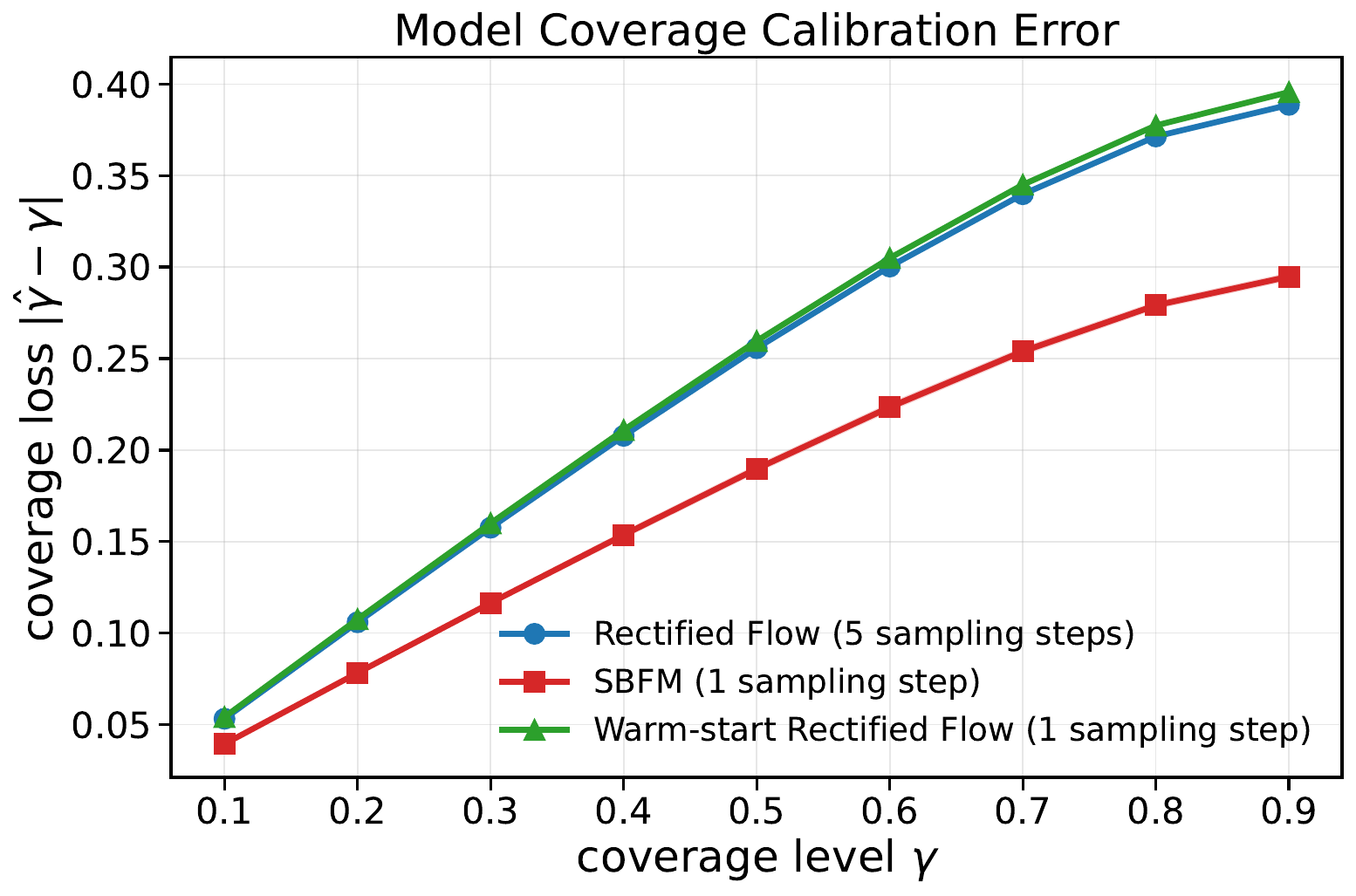}
    \captionof{figure}{Predictive coverage error on beam-spill forecasting.}
    \label{fig:spill}
\end{minipage}
\vspace{-4pt}
\end{figure}

We evaluate \model primarily on scientific forecasting tasks that require inferring the future evolution of stochastic dynamical systems from streaming observations. 
These include \emph{beam spill dynamics} in a particle accelerator system~\citep{whitbeck2025fast}, as well as fluid dynamics and weather forecasting~\citep{moro2025solving, rasp2024weatherbench}. 
We show that \model preserves distributional accuracy while greatly reducing inference latency. 
Additionally, we consider planning and control tasks~\citep{fu2020d4rl, Holl2020Learning}, where \model enables faster closed-loop re-planning. 
Details of the datasets and model implementation are deferred to Appendix~\ref{appx:exp}.



\textbf{Pretrained Model.} We mainly adopt Diffusion with DDIM sampler~\citep{song2021denoising} and Rectified flow~\citep{liu2023flow} as the pretrained model. 
Note that \model is compatible with general flow-based models, though we evaluate it using these two representative models for demonstration.

\textbf{Baselines.} We consider several classes of efficient flow-based baselines:
(1) \emph{One-step diffusion}: We evaluate consistency models \citep{song2023consistency} and MeanFlow \citep{geng2025mean}. For consistency models, we adopt ECT \citep{geng2025consistency}, an efficient fine-tuning method for consistency distillation.
(2) \emph{Warm-start diffusion}: Following~\citep{janner2022planning, duan2025real}, we add a controlled amount of noise to the previous prediction $x_{t-1}$ and denoise it under the updated condition $z_t$ using the same pretrained model. For direct comparison, we set the noise level to $\tau_{\text{renoise}}$, matching \model's re-noise level. This warm-start heuristic can be viewed as an ablation of \model, where fine-tuning stage is removed and the pretrained model is directly applied.
(3) \emph{Asynchronous denoising diffusion}: We adopt the idea from \citep{wei2025cldiffphycon, hoeg2025fast}, training an asynchronous diffusion and maintaining a partially denoised future trajectories at inference time.

 \textbf{Metrics of inference latency.} We mainly report the Number of Function Evaluations (NFE) for measuring sampling efficiency, defined as the number of first-order Euler steps used to solve the flow ODE.  Experiments are conducted on a single NVIDIA RTX 6000 Ada Generation (48 GB memory).

\textbf{Long-horizon $x_t$.} In our experiments, $x_t$ typically denotes a long-horizon trajectory, e.g., $x_t=[s_{t+1},\ldots,s_{t+H}]$. We define $\mathrm{shift}(x_t)=[s_{t+2},\ldots,s_{t+H},s_{t+H}]$ and practically train \model to transport $p(\mathrm{shift}(x_{t-1})\mid z_{\le t-1})$ to $p(x_t\mid z_{\le t})$, aligning states at the same physical time.


\begin{table*}[t!]
\vspace{-12pt}
    \centering
    \caption{Results of fluid system and weather forecasting. Prediction horizon $H=10$ and $12$ for Burgers'equation and weather forecasting.}
    \label{tab:physics}
    \resizebox{1\textwidth}{!}{
    \begin{tabular}{lcccccc}
        \toprule
        \multirow{2}{*}{\textbf{Method}} &  \multicolumn{3}{c}{\textbf{Burgers' Equation}} &\multicolumn{3}{c}{\textbf{Weather Forecasting}} \\
        & NFE & Energy Score $\downarrow$ & RMSE $\downarrow$ & NFE & Temperature Energy Score $\downarrow$ & Temperature RMSE $\downarrow$ \\
        \midrule
        Autoregressive & $H$ & $0.133_{\pm 0.009}$ & $0.253_{\pm 0.022}$ & $H$ & $2.583_{\pm 0.026}$ & $3.764_{\pm 0.042}$ \\
        Diffusion & 10 & $\textbf{0.100}_{\pm 0.001}$ & $0.258_{\pm 0.005}$ & 20 & $2.173_{\pm 0.084}$ & $4.249_{\pm 0.162}$ \\
        Rectified Flow & 5 & $0.103_{\pm 0.001}$ & $0.262_{\pm 0.001}$ & 10 & $\textbf{2.030}_{\pm 0.080}$ & $3.990_{\pm 0.146}$  \\
        Diffusion & 1 & $0.108_{\pm 0.002}$ & $0.247_{\pm 0.006}$ & 1 & $2.849_{\pm 0.197}$ & $4.974_{\pm 0.279}$ \\
        Rectified Flow & 1 & $0.114_{\pm 0.002}$ & $0.250_{\pm 0.006}$ & 1 & $2.608_{\pm 0.157}$ & $4.509_{\pm 0.228}$ \\ 
        MeanFlow & 1 & $0.234_{\pm 0.047}$ & $0.418_{\pm 0.070}$ & 1 & $5.700_{\pm 0.017}$ & $8.632_{\pm 0.030}$ \\
        Consistency Model & 1 & $0.107_{\pm 0.003}$ & $0.252_{\pm 0.004}$ & 1 & $2.610_{\pm 0.033}$ & $6.269_{\pm 0.067}$\\
        \midrule
        Asynchronous Diffusion & 1 & $0.264_{\pm 0.015}$ & $0.559_{\pm 0.011}$ & 1 & $3.368_{\pm 0.124}$ & $6.248_{\pm 0.086}$\\
        Warm-start Diffusion & 1 & $0.231_{\pm 0.013}$ & $0.487_{\pm 0.033}$ & 1 & $4.064_{\pm 0.065}$ & $7.072_{\pm 0.014}$\\
        Warm-start Rectified Flow & 1 & $0.218_{\pm 0.009}$ & $0.475_{\pm 0.004}$ & 1 & $3.209_{\pm 0.003}$ & $6.236_{\pm 0.108}$\\
        \midrule 
        \textbf{\model-Rectified Flow} & 1 & $\textbf{0.101}_{\pm 0.002}$ & $\textbf{0.239}_{\pm 0.005}$ & 1 & $2.353_{\pm 0.032}$ & $\textbf{3.660}_{\pm 0.061}$ \\
        \bottomrule
    \end{tabular}
    }
    \vspace{-6pt}
\end{table*}

\subsection{Mu2e Beam Spill Monitoring}
\label{exp:beam}

The Mu2e experiment at Fermilab is a particle-physics experiment designed to search for new physics~\cite{whitbeck2025fast}. In its beam spill system, charged particles are extracted over time, producing a beam intensity that evolves stochastically. The beam intensity must remain highly uniform to preserve detector live time and prevent irreversible loss of valuable physics data~\citep{narayanan2022machine}. Since the spill regulation system operates at 10 kHz with an overall feedback loop on the order of 1 ms, low-latency monitoring and prediction of the evolving beam state are crucial.

\textbf{Datasets.} We adopt a physics-based beam spill simulator to simulate beam intensity trajectories $s_{1:430}$ regulated by a controller over a 43ms time window~\cite{narayanan2022machine}. The task to infer 300-step future trajectories $p(s_{t+1:t+300}|s_{\le t}, a_{\le t+300})$ given historical states $s_{\le t}$ and input control signals $a_{\le t+300}$. 

\textbf{Distribution evaluation.}
We use $N=20$ particles to represent the predictive distribution. To evaluate distributional quality, we adopt the Energy Score, a proper scoring rule widely used in probabilistic time-series forecasting~\citep{gneiting2007strictly}. Given a ground-truth trajectory $x_t=s_{t+1:t+300}$ and the model's predictive distribution $p_\theta$, the energy score is defined as $\mathbb{E}_{\hat{x}_t\sim p_\theta}\|\hat{x}_t-x_t\|_1
-
\frac{1}{2}
\mathbb{E}_{\hat{x}_t,\hat{x}'_t\sim p_\theta}
\|\hat{x}_t-\hat{x}'_t\|_1.$
The energy score measures the absolute error but with a penalty of over-deterministic prediction. The energy score is proper, meaning its expected value is minimized when the predictive distribution matches the ground-truth distribution~\citep{gneiting2007strictly}. As a complementary pointwise metric, we also report root mean squared error (RMSE), although RMSE alone does not directly assess distributional quality.

\textbf{Results.} Table~\ref{tab:spill} reports the energy score and RMSE of different methods. We observe that a deterministic auto-regressive model suffers from exposure bias, causing errors to blow up over the 300-step prediction horizon. Asynchronous diffusion intentionally maintain partially denoised future trajectories. For evaluation, we fully denoise these trajectories into clean samples using one-step DDIM. Despite this, their performance remains inferior. Warm-start heuristic performs reasonably on this task, but later experiments show that it does not generalize reliably across datasets. In contrast, \model finetuned from a pretrained rectified flow attains the best energy score with significantly less sampling steps than all other approaches.

Figure~\ref{fig:spill} presents a calibration case study on the beam-spill dataset. For each coverage level $\gamma\in(0,1)$, we construct the central $\gamma$ quantile interval from the particles and compute its empirical coverage $\hat{\gamma}$ across test trajectories and prediction horizons. We report the coverage loss $|\hat{\gamma}-\gamma|$. One-step \model consistently achieves lower coverage loss than the full-step pretrained model and warm-start heuristics, indicating better-calibrated predictive uncertainty. Together with the energy score results, this suggests that \model improves distributional quality with calibrated uncertainty.


\textbf{Performance-latency trade-offs.}
Appendix~\ref{appx:latency}, Figure~\ref{fig:spill_time} shows that across varying sampling steps, SBFM matches or outperforms the full-step pretrained model with fewer sampling steps. Similar trends appear on other datasets (Figures~\ref{fig:maze_time}).

\subsection{Fluid and Weather System Forecasting}
\textbf{Datasets.} We further consider forecasting on two physical systems: (a) fluid dynamic system governed by the 1D Burgers’ equation, adapted from~\citep{wei2025cldiffphycon}. (b) real-world weather forecasting adapted from WeatherBench2~\citep{rasp2024weatherbench}. The model takes historical states $z_{\le t}:=s_{\le t}$ as conditions and infers $H$-step future states $x_t:=s_{t+1:t+H}$ over $t$. The prediction horizon is $H=10$ for fluid system and $H=12$ for weather forecasting.

\textbf{Results.} We use $N=20$ particles and  Table~\ref{tab:physics} reports the energy score and RMSE. We observe acceleration heuristics such as asynchronous denoising and warm starts underperform, while \model still maintains a competitive energy score with only one flow sampling step.



\subsection{Planning and Control}
\textbf{Datasets.}
We evaluate \model on: (a) maze planning~\citep{fu2020d4rl}, where an agent navigates around obstacles to a goal, and (b) smoke control~\citep{Holl2020Learning, wei2025cldiffphycon}, where smoke is steered toward a target exit in a 2D incompressible fluid governed by the Navier–Stokes equations. For smoke control, we test both fixed maps and random maps with perturbed obstacle positions. The model conditions on $z_{\le t}:=(s_t,a_{t-1})$ and predicts $x_t:=(s_{t+1:t+H},a_{t:t+H-1})$. The planning horizon is $H=600,800,10$ for Maze-Medium, Maze-Large, and smoke control, respectively. It re-plans (incorporates new observations) every 50 steps for maze planning and every step for smoke control.

We use a variant of Diffusion tailored for sequence data named Diffusion Forcing as the pretrained model~\citep{chen2024diffusion}. We also include other classic baselines~\citep{williams2015model, kumar2020conservative, kostrikov2022offline, pomerleau1988alvinn, zhuang2023behavior}, whose reported numbers are from \cite{chen2024diffusion} and \cite{wei2025cldiffphycon} in the same benchmark settings.


\textbf{Results.} We use $N=1$ particle. Tables~\ref{tab:maze} and~\ref{tab:smoke} report results on maze planning and smoke control respectively. 
\model consistently achieves highly competitive downstream performance compared to full-step diffusion on both tasks with a small number of sampling steps.

\begin{table*}[t!]
\vspace{-12pt}
\centering
\begin{minipage}[t]{0.49\textwidth}
    \centering
    \caption{Total reward on maze planning task. Planning horizon is $H=600/800$ for Maze Medium/Large.}
    \label{tab:maze}
    
    \resizebox{1.0\textwidth}{!}{
    \begin{tabular}{lccc}
        \toprule
        \textbf{Method} & \textbf{NFE} & \textbf{Maze Medium} $\uparrow$ &  \textbf{Maze Large} $\uparrow$ \\
        \midrule
        MPPI~\citep{williams2015model} & N/A & $10.2$ & $5.1$ \\
        CQL~\citep{kumar2020conservative} & N/A & $5.0$ & $12.5$ \\
        IQL~\citep{kostrikov2022offline} & N/A & $34.9$ & $58.6$  \\
        Diffuser & 256 & $121.5_{\pm 2.7}$ & $123.0_{\pm 6.4}$  \\
        Diffusion Forcing  & 50 & $149.4_{\pm 7.5}$ & $159.0_{\pm 2.7}$  \\
        Diffusion Forcing  & 3 & $110.4_{\pm 11.6}$ & $67.3_{\pm 27.1}$ \\
        Diffusion Forcing  & 1 & $22.1_{\pm 12.3}$ & $2.2_{\pm 1.9}$ \\
        \midrule
        Asynchronous Diffusion & 1 & $74.2_{\pm 2.4}$ & $96.1_{\pm 3.6}$\\
        Asynchronous Diffusion & 3 & $78.3_{\pm 0.7}$ & $103.9_{\pm 4.1}$ \\
        Warm-start Diffusion Forcing & 1 & $85.1_{\pm 12.6}$ & $61.9_{\pm22.1}$\\
        Warm-start Diffusion Forcing & 3 & $81.8_{\pm 8.2}$ & $62.3_{\pm20.7}$\\
        \midrule  
        \textbf{\model-Diffusion Forcing} & 1 & $57.5_{\pm 29.3}$ & $114.6_{\pm 49.0}$\\
        \textbf{\model-Diffusion Forcing} & 3 & $\textbf{168.6}_{\pm 11.9}$ & $\textbf{233.9}_{\pm 8.8}$\\
        \bottomrule
    \end{tabular}
    }
\end{minipage}
\hfill
\begin{minipage}[t]{0.49\textwidth}
\vspace{0pt}
    \centering
    \caption{Off-target leakage rate on smoke control task. Planning horizon is $H=10$.}
        \label{tab:smoke}
    
    \resizebox{0.99\textwidth}{!}{
    \begin{tabular}{lccc}
        \toprule
        \multirow{1}{*}{\textbf{Method}} & \multirow{1}{*}{\textbf{NFE}} &  Fixed Map $\downarrow$ & Random Map $\downarrow$ \\
        \midrule
        BC~\citep{pomerleau1988alvinn} & N/A & $0.672$ & $0.705$ \\
        BPPO~\citep{zhuang2023behavior} & N/A & $0.634$ & $0.652$ \\
        DiffPhyCon & 600 & $0.545$ & $0.375$  \\
        Diffusion Forcing  & 10 & $0.166_{\pm 0.036}$  &$0.291_{\pm 0.042}$\\
        Rectified Flow & 10 & $0.173_{\pm 0.078}$ & $0.190_{\pm 0.020}$ \\ 
        Diffusion Forcing  & 1 & $0.362_{\pm 0.068}$ &$0.280_{\pm 0.019}$ \\
        Rectified Flow & 1 & $0.432_{\pm 0.009}$ & $0.260_{\pm 0.020}$ \\ 
        MeanFlow & 1 & $0.369_{\pm 0.117}$ & $0.462_{\pm 0.166}$\\
        Consistency Model & 1 &$0.295_{\pm 0.013}$ & $0.293_{\pm 0.002}$\\
        \midrule
        Asynchronous Diffusion & 60  & $0.337$ & $0.346$  \\
        Warm-start Diffusion Forcing & 1 & $0.547_{\pm 0.026}$ &$0.268_{\pm 0.016}$\\
        warm-start Rectified Flow & 1 & $0.291_{\pm 0.039}$ & $0.286_{\pm 0.017}$\\
        \midrule 
        \textbf{\model-Diffusion Forcing} & 1 & $\textbf{0.097}_{\pm 0.012}$  &$\textbf{0.231}_{\pm 0.006}$\\
        \bottomrule
    \end{tabular}
    }
\end{minipage}
\vspace{-6pt}
\end{table*}




\subsection{Ablation Study}
\label{sec:ablation}
\vspace{-4pt}
The warm-start flow models can be viewed as an ablation of \model, where fine-tuning stage is removed and the pretrained model is used directly. We also conduct an ablation study on the renoising mechanism and the choice of training \model from raw data versus pretraining-finetuning strategy. Results are provided in Appendix~\ref{appx:ablation}.

\section{Conclusion and Limitations}
\label{sec:conclusion}
\vspace{-3pt}
This work studies the efficient deployment of flow-based models in sequential probabilistic inference, where models are required to infer the distribution of long-horizon future trajectory and continuously assimilate streaming observations under latency constraints. We propose Sequential Bayesian Flow Matching, which transports the predictive distribution across time steps and can be viewed as parameterizing the Bayesian belief update. Experiments show that it achieves distributional metrics competitive to full-step diffusion with significantly fewer sampling steps. 

A limitation of our approach is that recursive posterior-to-posterior updates may accumulate errors over time. A promising future direction is to develop adaptive re-noising strategies that regulate this accumulation, or to design criteria for when the model should reset by sampling from the Gaussian to recover from compounding errors.

\begin{ack}
We would like to thank Prof. Tailin Wu, as well as Ruiqi Feng, Long Wei and Haodong Feng for their insightful discussions and feedback on earlier versions of this work.

This work is primarily supported by NSF awards PHY-2117997, IIS-2239565, and IIS-2428777,
as well as Meta Research Award and Nvidia Academic Award. Prof. Bo Dai would like
to acknowledge support from NSF ECCS-2401391, NSF IIS-2403240, and ONR N000142512173.

\end{ack}

\makeatletter
\setlength{\bibhang}{0pt}
\renewcommand\@biblabel[1]{[#1]\hfill}
\makeatother
\bibliography{bibfile}
\bibliographystyle{unsrt}

\clearpage
\appendix
\section{Training \model From Raw Data}
\label{appx:proof}
Here we formally justify why training a conditional flow matching from successive ground-truth states $(x_{t-1}, x_{t})$ with conditions $z_{\le t}$ cannot implement filtering ODE Eq.~\eqref{eq:ode_sfm}, due to a mismatch of source distribution.

\textbf{Goal.} The goal is to train a flow following Eq.~\eqref{eq:ode_sfm}, with initial distribution $p(x_{t-1}|z_{\le t-1})$ and target distribution $p(x_{t}|z_{\le t})$.

\textbf{Training data construction.} Let us describe the naive training data construction protocol. Suppose we have a ground-truth trajectory $(x^*_{1:T},z^*_{1:T})$ drawn from an offline dataset. Formally, this trajectory is a sample $(x^{*}_{1:T},z_{1:T}^*)\sim p(x_{1:T},z_{1:T})$ from the joint distribution of states and observations. Suppose we want to train the filtering from $p(x_{t-1}|z_{\le t-1})$ to $p(x_t|z_{\le t})$, the naive approach extract $(x_{t-1}^*, x_t^*)$ and $z_{\le t}^*$ from the full trajectory. This means $(x_{t-1}^*, x_t^*,z_{\le t}^*)\sim p(x_{t-1},x_t,z_{\le t})$, i.e., they are drawn jointly from the marginal distribution.

Based on these settings, we can argue the following straightforward proposition, which shows the flow ODE's initial distribution is $p(x_{t-1}|z_{\le t})$ rather than the desired $p(x_{t-1}|z_{\le t-1})$.

\begin{proposition}
\label{thm:train_raw_data}
   Suppose a conditional flow matching is trained by minimizing
   \begin{equation}
    \theta^*=\arg\min_{\theta}\mathbb{E}_{(x^*_{t-1},x_t^*,z_{\le t}^*)\sim p(x_{t-1},x_t,z_{\le t})}\mathbb{E}_{\tau\sim \text{Unif(0,1)}} \norm{v_{\theta}(x_t(\tau),\tau;z^*_{\le t})-\dot{x}_\tau(t)}^2,
\end{equation}
where $x_t(\tau):=\alpha_\tau x_{t-1}^*+\sigma_{\tau}x^*_t$ and $\dot{x}_t(\tau)=\dot{\alpha}_{\tau}x_{t-1}^*+\dot{\sigma}_{\tau}x^*_{t}$.
Then given sufficient model capacity, the optimal velocity field $v_{\theta^*}(x_t(\tau), \tau;z_{\le t}^*)=\mathbb{E}_{x_{t-1},x_t|z_{\le t}}\dot{x}_{\tau}(t)$ induces a flow ODE from initial distribution $p(x_{t-1}|z_{\le t}^*)$ to $p(x_t|z_{\le t}^*)$.
\end{proposition}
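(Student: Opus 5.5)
Note first a notational slip in the statement. With $\alpha_0=\sigma_1=1$ and $\alpha_1=\sigma_0=0$ from the preliminaries, the interpolant as written puts $x^*_{t-1}$ at $\tau=0$. To match Eq.~\eqref{eq:ode_sfm}, the intended convention is that the source $x_{t-1}^*$ sits at $\tau=1$ and the target $x_t^*$ at $\tau=0$. We read the path as $x_t(\tau)=\alpha_\tau x_t^*+\sigma_\tau x_{t-1}^*$. Nothing below depends on which endpoint is which, only on what the endpoint marginals are.

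\emph{Step 1: reduce to a family of unconditional problems indexed by $z_{\le t}$.} By the tower property, the objective equals
\[
\mathbb{E}_{z^*_{\le t}\sim p(z_{\le t})}\,\mathbb{E}_{(x^*_{t-1},x^*_t)\sim p(x_{t-1},x_t\mid z^*_{\le t})}\,\mathbb{E}_\tau\bigl\|v_\theta(x_t(\tau),\tau;z^*_{\le t})-\dot x_t(\tau)\bigr\|^2 .
\]
Under sufficient capacity, $v_\theta(\cdot,\cdot;z_{\le t})$ can be chosen independently for each value of the condition. The minimisation therefore decouples pointwise in $z_{\le t}$: for each fixed $z^*_{\le t}$ it is a standard flow matching loss. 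Its coupling is $\pi_{z}=p(x_{t-1},x_t\mid z^*_{\le t})$.

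\emph{Step 2: identify the minimiser.} For fixed $z_{\le t}$ and $\tau$, the inner problem is a least-squares regression of $\dot x_t(\tau)$ on $x_t(\tau)$. Its $L^2$-optimal solution is the conditional expectation
\[
v_{\theta^*}(x,\tau;z_{\le t})=\mathbb{E}\bigl[\dot x_t(\tau)\mid x_t(\tau)=x,\,z_{\le t}\bigr],
\]
with the expectation over $(x_{t-1},x_t)\sim p(\cdot,\cdot\mid z_{\le t})$. This is the expression in the statement. The justification is the usual bias--variance decomposition $\mathbb{E}\|v-Y\|^2=\mathbb{E}\|v-\mathbb{E}[Y|X]\|^2+\mathbb{E}\|Y-\mathbb{E}[Y|X]\|^2$.

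\emph{Step 3: invoke the flow matching marginal-preservation result.} We use the result recalled in the preliminaries~\citep{lipman2023flow, liu2023flow}. For any coupling $\pi$, the velocity $v(x,\tau)=\mathbb{E}[\dot x(\tau)\mid x(\tau)=x]$ generates a flow whose time-$\tau$ marginal equals the law of the interpolant $x(\tau)$. The reason is that the interpolant's density satisfies the continuity equation $\partial_\tau p_\tau+\nabla\cdot(p_\tau v)=0$. This follows by testing against smooth $\varphi$: $\frac{d}{d\tau}\mathbb{E}\varphi(x(\tau))=\mathbb{E}[\nabla\varphi(x(\tau))\cdot\dot x(\tau)]=\mathbb{E}[\nabla\varphi(x(\tau))\cdot v(x(\tau),\tau)]$.

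\emph{Step 4: read off the endpoints.} Apply Step 3 with $\pi=\pi_z$. The interpolant's marginals at the two endpoints are the marginals of $\pi_z$, namely $p(x_{t-1}\mid z^*_{\le t})$ at the source end and $p(x_t\mid z^*_{\le t})$ at the target end. So the learned ODE transports $p(x_{t-1}\mid z^*_{\le t})$ to $p(x_t\mid z^*_{\le t})$. A remark then contrasts this with the required source $p(x_{t-1}\mid z_{\le t-1})$: by Bayes' rule the two differ by the factor $p(z_t\mid x_{t-1},z_{\le t-1})/p(z_t\mid z_{\le t-1})$, which is not identically $1$ unless $z_t$ is conditionally independent of $x_{t-1}$.

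The only delicate point is Step 3. It needs regularity conditions, such as a locally Lipschitz $v$ for ODE well-posedness and integrability of $\dot x$. At the endpoint where $\sigma_\tau$ or $\alpha_\tau$ vanishes, the interpolant density may degenerate. We would state these as standing assumptions, as in the flow matching literature, rather than prove them, since the proposition is presented as straightforward.
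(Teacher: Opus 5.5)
Your proposal is correct and follows the same route as the paper: factor the joint as $p(z_{\le t})\,p(x_{t-1},x_t\mid z_{\le t})$, argue that the minimiser must minimise the conditional loss for each $z_{\le t}$, identify it as the conditional expected velocity, and read off the endpoint marginals $p(x_{t-1}\mid z_{\le t})$ and $p(x_t\mid z_{\le t})$. Your bias--variance and continuity-equation steps spell out what the paper asserts in one line, and you are right that the statement's interpolant places $x^*_{t-1}$ at the opposite endpoint from the convention of Eq.~\eqref{eq:ode_sfm}, which does not affect the conclusion.
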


\begin{proof}
     Let us factorize the distribution $p(x_{t-1},x_t,z_{\le t})=p(z_{\le t})p(x_{t},x_{t-1}|z_{\le t})$ and we rewrite the flow matching loss as
\begin{equation}
    \mathbb{E}_{z_{\le t}^*\sim p(z_{\le t})}\mathbb{E}_{(x^*_{t-1},x_t^*)\sim p(x_{t-1},x_t|z_{\le t})}\mathbb{E}_{\tau\sim \text{Unif(0,1)}} \norm{v_{\theta}(x_t(\tau),\tau;z^*_{\le t})-\dot{x}_\tau(t)}^2.
\end{equation}
The minimizer of the loss above is an expected loss over all conditions $z_{\le t}^*$. Therefore, the minimizer must also minimize a loss conditioned on any $z_{\le t}^*$, i.e., the loss below:
\begin{equation}
    \mathbb{E}_{(x^*_{t-1},x_t^*)\sim p(x_{t-1},x_t|z_{\le t})}\mathbb{E}_{\tau\sim \text{Unif(0,1)}} \norm{v_{\theta}(x_t(\tau),\tau;z^*_{\le t})-\dot{x}_\tau(t)}^2.
\end{equation}
The optimal solution to this loss leads to a conditional velocity $v(x_t(\tau),\tau;z_{\le t})=\mathbb{E}_{x_{t-1},x_t|z_{\le t}}\dot{x}_{\tau}(t)$, whose flow ODE transports from initial distribution $p(x_{t-1}|z_{\le t})$ to $p(x_t|z_{\le t})$. As a result, the initial distribution does not align with the previous posterior $p(x_{t-1}|z_{\le t-1})$.
\end{proof}



\section{Experimental Details}
\label{appx:exp}

\subsection{Dataset}
\label{appx:exp_dataset}
\begin{table}[ht!]
    \caption{Dataset statistics. The trajectory length refers to the total length of prediction for a specific task, and the prediction horizon is the length of rolling prediction window per physical time. The prediction horizon = "shrinking" means the task requires a shrinking prediction window whose length equals to the trajectory length at the beginning but decreases over time. $\#$pretraining and $\#$finetuning refer to the number of pretraining and finetuning trajectories.}
    \label{tab:dataset_stat}
    \resizebox{0.98\textwidth}{!}{
\begin{tabular}{lllllll}
    \toprule
Task & Dataset     & Trajectory length  & Prediction horizon & Feature dim.                  &  \#pretraining  & \#finetuning \\ \hline
\multirow{3}{*}{Forecasting} & Synthetic & 8 & 5 & 1 & 10,000 & 5,000  \\
& Beam Spill & 430 & 300 & 1 & 40,000 & 5,000 \\
& Burger's Equation & 16  & 10 & 64 & 90,000                                    & 10,000                                   \\
& WeatherBench2    &28 & 12  & 32$\times$32$\times$16    & 73,112                                    & 5,000                                    \\
\hline
State Estimation & Lorentz Attractor      & 100 & 1 & 3          & 75,000                                    & 5,000                                    \\\hline
\multirow{3}{*}{Planning\&Control} & Maze Medium & 600 & 600 (Shrinking) & 2 & 1,999,400 & 16,384 \\
& Maze Large & 800 & 800 (Shrinking) & 2 & 3,999,200 & 32,768\\
& Smoke Control  & 65 & 10 & 64$\times$64$\times$ 6 & 36,000 & 2,000\\ 
        \bottomrule
\end{tabular}
}
\end{table}

We present the dataset statistics in Table~\ref{tab:dataset_stat}. The trajectory length refers to the total prediction duration (or episode) and prediction horizon is the length of prediction window at each time step. $\#\text{pretrain}$ and $\#\text{finetuning}$ refer to the number of trajectories we use for model pretraining and finetuning. We provide details data construction below.

\textbf{Synthetic Bernoulli random walk process.} 
We construct a Bernoulli random walk
$p(s_{t+1}|s_t) = \tfrac{1}{2}\delta_{s_t+1} + \tfrac{1}{2}\delta_{s_t-1}$,
with initial state $s_1 \sim \mathcal{N}(0, 10^{-4})$. The models infer posterior distribution $p(s_{t+1:t+5}|s_t)$ over $t=1, 2,3,4$. Ground-truth $p(s_{t+\Delta}|s_t)$ is a discrete distribution with $(\Delta+1)$ modes. We first train a rectified flow model~\cite{liu2023flow} on 10,000 trajectories as the pretrained model, and then fine-tune it on 5,000 trajectories to obtain \model. We evaluate the accuracy of the learned distribution using the 1-Wasserstein distance
$\mathcal{W}_{1}(t) = \mathcal{W}_1\big(p_{\theta}(x_{t+1:t+5}|x_t), \, p(x_{t+1:t+5}|x_t)\big)$,
where $p_{\theta}$ denotes the empirical distribution induced by 1000 generated particles. We report the average Wasserstein distance
$\langle W_1 \rangle = \frac{1}{T} \sum_{t=1}^{T} W_1(t)$,
which summarizes performance across time steps ($T=4$). Figure~\ref{fig:w1_dist} plots $\langle W_1\rangle$ with respect to flow matching sampling steps. The results show that \model consistently achieves lower Wasserstein distance than the pretrained model under the same sampling budget. The uncertainty band is one time standard deviation of three runs of the models.

\textbf{Beam spill dynamic.} We adopt the simulator from~\citep{narayanan2022machine}. The state $s_t$ is a one-dimensional variable representing the beam intensity, and a PID controller takes the state and returns an one-dimensional control signal $a_t$. The simulator will simulate a 43 ms time window, with 0.1 ms per time step, resulting in a full state-action trajectory $x_{1:T},a_{1:T}$ with $T=430$. The simulator introduces a stochastic dynamic, by injecting a time-correlated noises $\epsilon_{1:T}$ for the process. As a result, it induces a probabilistic distribution $p(x_{1:T}, a_{1:T})$ where the future trajectory $x_{t+1:t+H}$ given history is uncertain.

\textbf{Burgers' equation.} We adopt the dataset from~\citep{wei2025cldiffphycon}. The 1D Burgers' equation follows:
\begin{equation}
\left\{
\begin{aligned}
&\frac{\partial s}{\partial t}
= -s \cdot \frac{\partial s}{\partial x}
   + \nu \frac{\partial^2 s}{\partial x^2}
   + a(x,t),
&& \text{in } [0,T]\times\Omega, \\
&s(x, t) = 0,
&& \text{in } [0,T]\times\partial\Omega, \\
&s(x, 0) = s_0(x),
&& \text{in } \{\tau=0\}\times\Omega.
\end{aligned}
\right.
\end{equation}
Here state states $s(x,t)$ is a field over space $x$ and time $t$, and input (action) $a(x,t)$ is also a function of space and time. This system dynamic is a deterministic process. To impose uncertainty, we introduce: (1) partial observation: only half of the space is observable, i.e., $s(x,t)$ for $x\in \Omega_{\text{right}}$ is removed from model input; (2) agnostic input: the input $a(x,t)$ (which is randomly generated in training and test dataset) is also excluded from model input.

\textbf{Weather forecasting.} We adopt the WeatherBench2~\citep{rasp2024weatherbench}. The dataset consists of global weather data from year 1959 to 2023, measured with 6 hours as interval. The data is a 2D/3D temporal data, including 2D features (surface variables) such as \texttt{sea\_level\_pressure}, \texttt{2m\_temperature}, and 3D features such \texttt{geopotential} as a function as height (measured by Atmospheric pressure). Due to computational resources, we only choose partial features (2D:\texttt{2m\_temperature}, \texttt{10m\_u\_component\_of\_wind}, \texttt{10m\_v\_component\_of\_wind}, \texttt{mean\_sea\_level\_pressure}, 3D: \texttt{geopotential},\texttt{temperature},\texttt{u\_component\_of\_wind},\texttt{v\_component\_of\_wind}, at height $500, 850, 1000$ Pa.) as our system states. We also constraint the latitude from 37.25 to 45.0 and longitude from 115.0 to 122.75. We use the data from 1959 to 2011 for pretraining, 2011 to 2015 for finetning and 2021 to 2023 for testing. We report \texttt{2m\_temperature} in our main table~\ref{tab:physics}.

\textbf{Maze planning.} We adopt maze-medium and maze-large from D4RL benchmark~\citep{fu2020d4rl}. The offline dataset consists of random-walk trajectories and at test time the goal is to reach a target position.

\textbf{Smoke Control.} We adopt the dataset from~\cite{wei2025cldiffphycon}. The system state is a 2D incompressible fluid following the Navier-Stokes equations:
\begin{equation}
\begin{cases}
\displaystyle
\frac{\partial \mathbf{v}}{\partial t}
+ \mathbf{v}\cdot\nabla \mathbf{v}
- \nu \nabla^2 \mathbf{v}
+ \nabla p
= \mathbf{f}, \\[6pt]
\nabla \cdot \mathbf{v} = 0, \\[6pt]
\mathbf{v}(\mathbf{x}, 0) = \mathbf{v}_0(\mathbf{x}).
\end{cases}
\end{equation}
Here the system state $s$ consists of velocity field $\mathbf{v}$ and pressure field $p$, and external force field $\mathbf{f}$ is the action. The task is to generate $\mathbf{f}$ to guide an initial smoko in the field to avoid obstacles and reach a target exit area. There are two settings in~\citep{wei2025cldiffphycon}: large domain control and boundary control. We adopt large domain control setting, where force signals are applied to all peripheral regions outside the obstacles, consisting of 1,792 cells.

\textbf{State Estimation.} We adopt the Lorenz attractor simulator from~\citep{he2025trackdiffuser} to generate data. The system state $s_t$ is a 3D vector following an nonlinear state-space model:
\begin{equation}
    s_t = \mathbf{F}(s_{t-1})s_{t-1} + w_t,
\end{equation}
\begin{equation}
\mathbf{F}(s_{t-1})
=
\exp\!\left(
\begin{bmatrix}
-10 & 10 & 0 \\
28 & -1 & -s_{t-1,1} \\
0 & s_{t-1,1} & -\dfrac{8}{3}
\end{bmatrix}
\, \Delta
\right),
\end{equation}
where $w_t$ is a noise term (we use Gaussian noise $w_t\sim \mathcal{N}(0, q^2I)$) and $\Delta$ is the time interval. The observation is $z_t=g(x_t)+\eta_t$, where $g(x_t)$ is a rotation operation and $\eta_t\sim \mathcal{N}(0, r^2I)$. The task is to estimation current state system $s_t$ given historical observation $z_{\le t}$ ($s_t$ cannot be directly observed).

\subsection{Implementation Details of Algorithms~\ref{alg:sfm_train},\ref{alg:sfm_inference}}
\label{appx:exp_our_method}

\textbf{Forecasting.}
In forecasting task we have $x_t= s_{t+1:t+H}$ and $z_t=x_t$, where $s_t$ is the real physical states we want to predict. To apply Algorithm~\ref{alg:sfm_train}, we take a small finetuning dataset $\{s^{(i)}_{1:T}, z^{(i)}_{1:T}\}_{i=1,2,...,n}$ and leverage a pretrained Rectified Flow model $v_{\theta_0}$ to generate some trajectories $\hat{s}^{(i)}_{t+1:t+H}\sim p_{\theta_0}(s_{t+1:t+H}|z^{(i)}_{\le t})$. We could directly use $\hat{s}^{(i)}_{t:t+H-1}$ as the source distribution and $s^{(i)}_{t+1:t+H}$ as the target distribution in \model ODE Eq.~\eqref{eq:ode_sfm}. In practice, to better align the prediction at the same physical time, we instead drop the first $\hat{s}^{(i)}_t$ (as it is already observed at time $t$) and pad a $\hat{s}^{(i)}_{t+H-1}$ (can be seen as a moving average) and construct $(\hat{s}^{(i)}_{t+1:t+H-1}, \hat{s}^{(i)}_{t+H-1})$ as the source distribution instead.

\textbf{Planning and Control.}
In planning and control task we have $x_{t}=(s_{t+1:t+H},a_{t:t+H-1})$ and $z_t=(s_t, a_{t-1})$. To apply Algorithm~\ref{alg:sfm_train}, we require an imitation learning dataset consisting of expert state-action trajectories. We treat the pretrained flow model $p_{\theta_0}$ as an expert policy and let it interact with the environment to collect a few state-action trajectories. Ultimately, we have model predicted trajectory $\hat{s}^{(i)}_{t+1:t+H},\hat{a}^{(i)}_{t:t+H-1}\sim p_{\theta_0}(s_{t+1:t+H},a_{t:t+H-1}|s^{(i)}_{1:t}, \hat{a}^{(i)}_{1:t-1})$ at any time $t$ and a resulting actual full trajectory $(s^{(i)}_{1:T}, a_{1:T}^{(i)})$, where $a^{(i)}_t$ is the actual executed action (i.e., the first predicted action at each time). We treat $(\hat{s}^{(i)}_{t:t+H-1},\hat{a}^{(i)}_{t-1:t+H-2})$ as the source distribution and $(s^{(i)}_{t+1:t+H},a^{(i)}_{t:t+H-1})$ as the target distribution in \model. Again, since at time $t$ we observe $z^{(i)}_t = (s^{(i)}_t, a^{(i)}_{t-1})$, in practice we remove already-observed state-action $\hat{s}^{(i)}_{t},\hat{a}^{(i)}_{t-1}$ and instead adopt $((\hat{s}^{(i)}_{t+1:t+H-1},\hat{s}^{(i)}_{t+H-1}), (\hat{a}^{(i)}_{t:t+H-2}, \hat{a}^{(i)}_{t+H-2})$ as the source distribution.

\textbf{State Estimation.} In state estimation task we have $x_t=s_t$. Similarly we call a pretrained model to generate $\hat{s}^{(i)}_{t}\sim p_{\theta_0}(s_{t}|z_{\le t}^{(i)})$. We use $\hat{s}^{(i)}_{t}$ as the source distribution and the actual physical state $s^{(i)}_t$ as the target distribution.

\subsection{Implementation Details of Baselines}
\label{appx:exp_baseline}


\textbf{Warm-start Diffusion.} The warm-start diffusion is a heuristic that denoise a previously noisy $x_{t-1}$, but conditioning on new observation $z_t$, to obtain updated prediction of $x_t$. In our implementation, warm-start diffusion share the exact pipeline as our \model, the only difference is that it use the pretrained model $v_{\mathrm{pre}}$ for denoising, while \model uses a finetuned flow $v_{\theta}$ for generation.

\textbf{Asynchronous Diffusion (CL-Diffusion~\citep{wei2025cldiffphycon}).} We implement asynchronous diffusion following the idea of \cite{wei2025cldiffphycon}. The original CL-Diffusion requires training two diffusion models, a synchronous diffusion and an asynchronous diffusion that allows different denoising scheduling. In our implementation, we use Diffusion Forcing~\citep{chen2024diffusion} as the pretrained model, which naturally allows to denoise in an arbitrary schedule (either synchronous or asynchronous). So we use a single pretrained diffusion forcing to replace the two models in CL-Diffusion.

\subsection{Diffusion Guidance in Maze Planning}

 The inference of diffusion models on Maze planning task often relies on diffusion guidance based on rewards, i.e., modifying local score/velocity function during sampling. The consistency model and MeanFlow baselines lack a well-defined score/velocity function, and therefore we omit these baselines on maze planning. On the other hand, \model are finetuned on the imitation learning dataset of trajectories produced by guided pretrained models. Therefore, the finetuned \model learns the transportation between guided trajectories, thus removing the need for explicit guidance in our implementation.

\section{Additional Experimental Results}
\label{appx:extra_exp}

\subsection{State Estimation}
\textbf{Datasets.} We consider state estimation of Lorenz attractor, a three-dimensional chaotic dynamical system following the nonlinear state-space model: $
    s_t = f(s_t)s_t + \mathcal{N}(0, q^2I)$ and $z_t = g(s_t)+\mathcal{N}(0, r^2I),$
where $s_t\in\mathbb{R}^3, f(s_t)\in\mathbb{R}^{3\times 3}, g(s_t)\in\mathbb{R}^3$. It is a simplified mathematical model used to capture chaotic behavior and to understand atmospheric convection \citep{lorenz2017deterministic}. We follow \cite{he2025trackdiffuser} to set $g$ be a rotation matrix operation, $q=r/10$ and varies $r$ to test the model performance under different environmental stochasticity. The model takes historical observations $z_{\le t}$ as conditions and predicts current state $x_t:=s_t$. The prediction horizon is $H=1$.
 
Except for learning-based approaches, we also compare against model-based approaches, including Extended Kalman Filtering~\citep{kushner1967approximations}, Unscented Kalman Filtering~\citep{julier1997new}, and Particle Filtering~\citep{del1997nonlinear}. These methods have explicit access to the underlying system dynamics $f(x_t)$ and measurement function $g(x_t)$, and therefore serve as oracle-style baselines.

\textbf{Results.} Table~\ref{tab:state} reports performance measured by $10\log_{10}(\text{MSE})$ under varying levels of environmental uncertainty (larger $1/r^2[\textbf{dB}]$ indicates lower stochasticity). As expected, the performance of all methods degrades as environmental stochasticity increases. Notably, warm-start approaches perform reasonably well when environmental stochasticity is low, but their performance becomes worse as uncertainty increases. This behavior is expected, since warm-start methods could approximate the correct predictive distribution when successive states $x_{t-1},x_t$ follow similar distributions, which is more likely to hold in near-deterministic system dynamics. In contrast, \model explicitly learns the probability flow across successive time steps. It consistently achieves competitive performance across all uncertainty levels with one sampling step among the learning-based methods.

\begin{table}[ht!]
    \centering
    \caption{Results of state estimation. The reported numbers are the $10\log_{10}(\text{MSE})$ (lower the better) with $\text{MSE}$ averaged over the entire episode (100 steps). $1/r^2[\textbf{dB}]=10\log_{10}(1/r^2)$ represents different uncertainty levels of measurements (smaller the $1/r^2[\textbf{dB}]$ larger the stochasticity). 
    }
    \label{tab:state}
    \resizebox{0.75\textwidth}{!}{
    \begin{tabular}{lccccc}
        \toprule
        \multirow{2}{*}{\textbf{Method}} & \multirow{2}{*}{\textbf{NFE}} & \multicolumn{4}{c}{Environment Stochasticity \textbf{$1/r^2$[dB]}}  \\
        &&-10 & 0 & 10 & 20  \\
        \midrule 
        \multicolumn{6}{l}{\textbf{Model-based}}\\ 
        Extended KF  & N/A &$2.69$ &$-6.19$ & $-16.49$ & $-25.18$  \\
        Unscented KF & N/A & $9.05$ & $2.58$ & $-3.52$ & $-16.24$  \\
        Particle Filtering & N/A &  $3.76$ & $-4.76$ & $-14.68$ & $-22.93$ \\
        \midrule 
        Diffusion Forcing  & 10 & $11.74_{\pm 0.01}$ & $0.68_{\pm 0.02}$ & $-9.57_{\pm 0.07}$ & $-17.08_{\pm 1.56}$ \\
        Rectified Flow & 5 & $11.93_{\pm 0.02}$ & $0.57_{\pm 0.02}$ & $-9.30_{\pm 0.30}$ & $-18.65_{\pm 0.33}$  \\ 
        Diffusion Forcing  & 1 & $10.59_{\pm 0.07}$ & $1.15_{\pm 0.17}$ & $-8.74_{\pm 0.30}$ & $-15.01_{\pm 2.07}$ \\
        Rectified Flow & 1 & $15.39_{\pm 0.10}$ & $4.90_{\pm 1.04}$ & $-5.94_{\pm 0.26}$ & $-13.25_{\pm 1.41}$ \\ 
        MeanFlow & 1 & $12.87_{\pm 0.62}$ & $6.20_{\pm 0.44}$ & $2.82_{\pm 2.00}$ & $0.85_{\pm 1.70}$ \\
        Consistency Model & 1 & $10.85_{\pm 0.00}$ & $4.15_{\pm 0.16}$ & $2.78_{\pm 0.15}$ & $2.45_{\pm 0.04}$  \\
        \midrule
        \multicolumn{6}{l}{\textbf{Warm-start Heuristic}}\\ 
        Warm-start Diffusion Forcing  & 1 &  $13.52_{\pm 0.03}$& $1.39_{\pm 0.17}$ & $-9.42_{\pm 0.04}$ & $-17.11_{\pm 1.51}$ \\
        Warm-start Rectified Flow & 1 & $16.86_{\pm 0.11}$ & $5.07_{\pm 0.40}$ & $-7.66_{\pm 0.42}$ & $-18.45_{\pm 0.48}$ \\
        \midrule  
        \multicolumn{6}{l}{\textbf{Ours}}\\ 
        SBFM-Rectified Flow & 1 & $10.12_{\pm 0.01}$ & $0.26_{\pm 0.20}$ & $-9.79_{\pm 0.06}$ & $-19.58_{\pm 0.09}$ \\
        \bottomrule
    \end{tabular}
    }
\end{table}

\subsection{Performance-Latency Trade-offs}
\label{appx:latency}
We further show the performance-latency trade-off on spill beam forecasting (Figure~\ref{fig:spill_time}) and maze planning (Figure~\ref{fig:maze_time}). We find \model only a few sampling steps ($\le 3$) to achieve superior performance. 

\begin{figure}[h]
    \centering
    \includegraphics[width=0.6\linewidth]{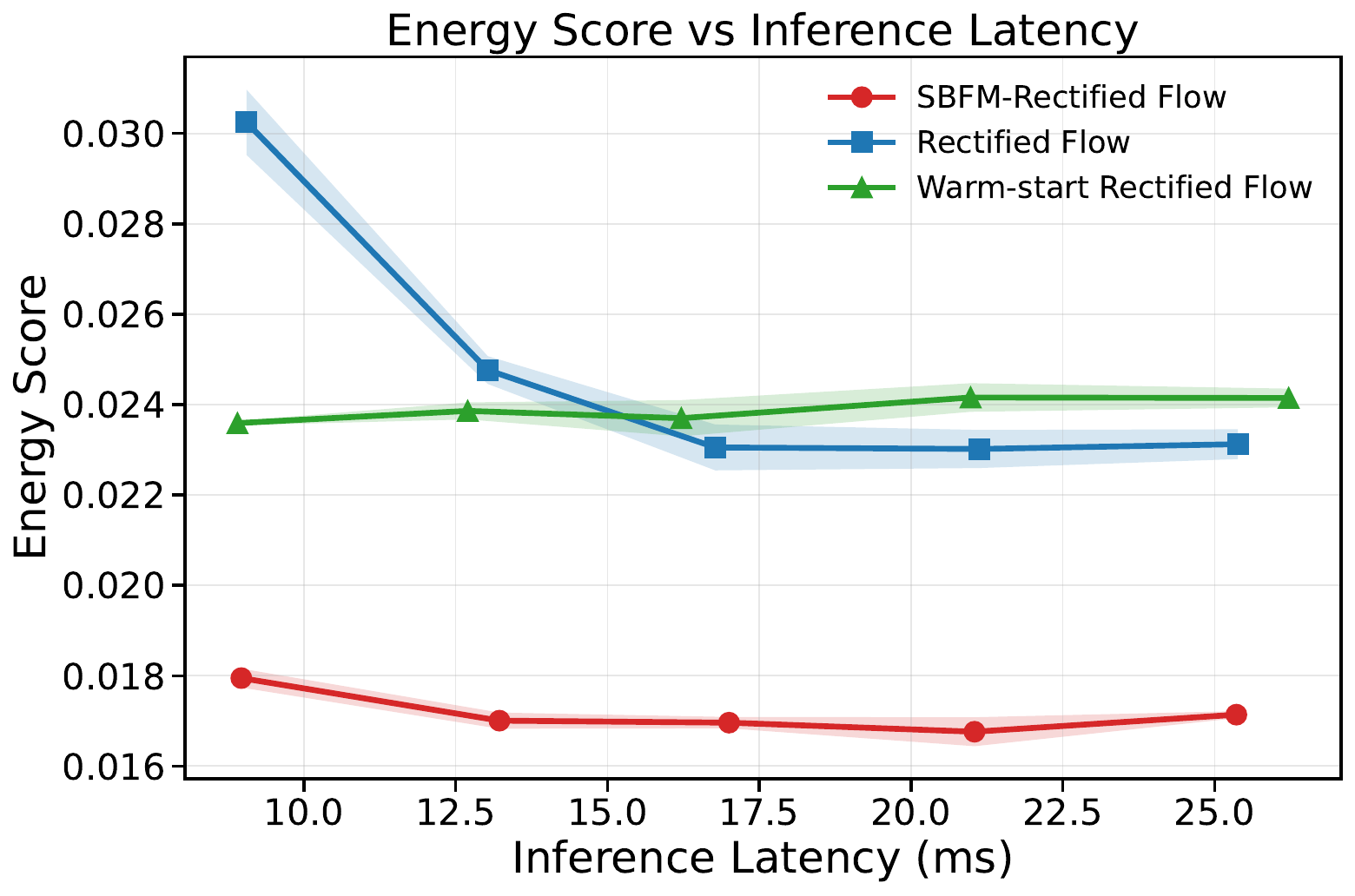}
    \caption{Inference latency (x axis) v.s. energy score (y axis) for beam spill forecasting under varying sampling steps (1, 2, 3, 4, 5 steps). }
    \label{fig:spill_time}
\end{figure}

\begin{figure}[h]
    \centering
    \includegraphics[width=0.6\linewidth]{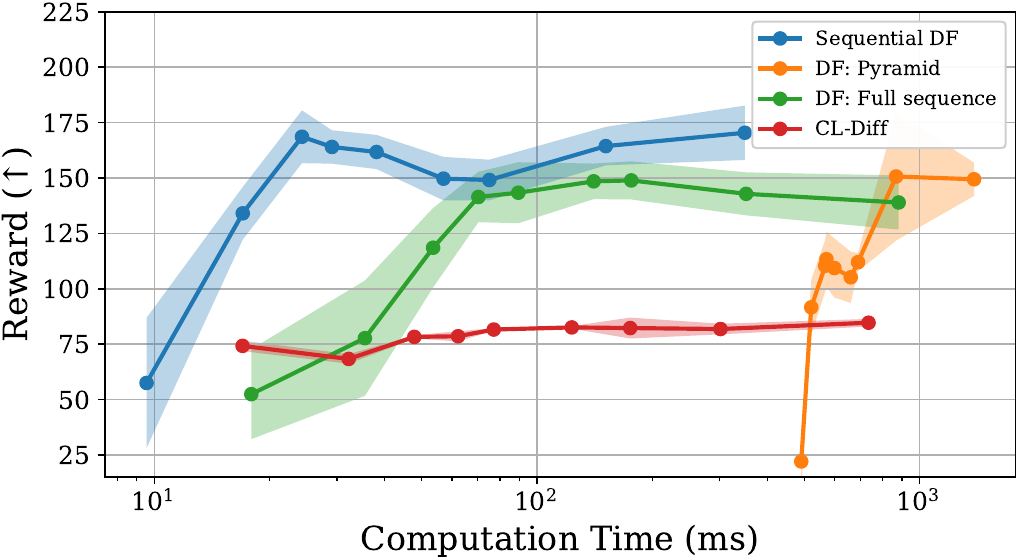}
    \caption{Inference latency (x axis) v.s. performance (y axis) for maze planning under varying sampling steps (1, 2, 3, 4, 5, 8, 10, 20 and 50 steps). DF: Pyramid and DF: Full-sequence refer to two denoising schedule of diffusion forcing. Sequential DF uses full sequence denoising by default.} 
    \label{fig:maze_time}
\end{figure}

\subsection{Ablation Study}
\label{appx:ablation}
We conduct ablation study on re-noise mechanisms and training with model-generated trajectories. 

\paragraph{Re-noise level.} Figure~\ref{fig:ab_renoise} reports performance under different re-noise levels $\tau_{\text{renoise}}$. We observe that both using a fully clean previous estimate ($\tau_{\text{renoise}}=0$) and completely discarding the previous estimate ($\tau_{\text{renoise}}=1$) lead to degraded performance. In contrast, there exists an intermediate range of $\tau_{\text{renoise}}$ that yields consistently strong and robust results. Moreover, this optimal range shifts toward larger values as system uncertainty increases (e.g., the optimal renoise level is $0.4\!\sim\!0.6$ for high system uncertainty $1/r^2$[dB]=-10, and is $0.2\!\sim\!0.4$ for low system uncertainty $1/r^2$[dB]=10). This behavior is expected: higher system uncertainty induces greater uncertainty in the model predictions, which in turn requires a higher re-noise level to adequately accommodate this stochasticity.

\begin{figure}[t]
    \centering
    \begin{minipage}{0.48\linewidth}
        \centering
        \includegraphics[width=\linewidth]{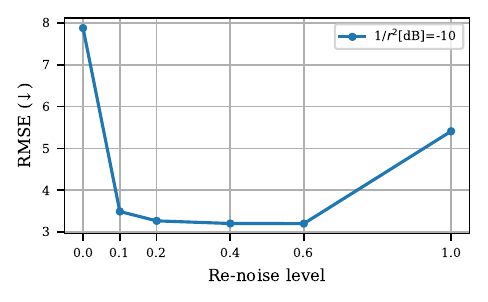}
    \end{minipage}\hfill
    \begin{minipage}{0.48\linewidth}
        \centering
        \includegraphics[width=\linewidth]{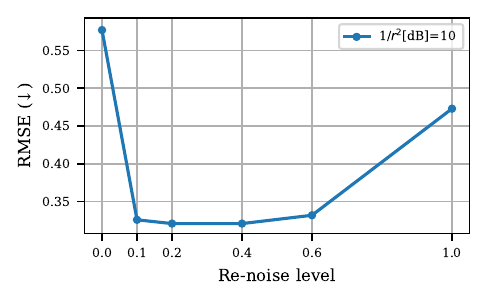}
    \end{minipage}
    \caption{Performance as a function of re-noise level $\tau_{\text{renoise}}$ on state estimation. Left and right figures are at different system uncertainty levels.}
    \label{fig:ab_renoise}
\end{figure}

\paragraph{Training with model generated trajectories.} Table~\ref{tab:ab} compares the performance of Burgers' equation forecasting and state estimation ($1/r^2$[dB]=-10) of using model-generated trajectories ($(\hat{x}_{t-1},x_t)$) against pure ground-truth trajectories $(x_{t-1},x_t)$ in \model finetuning. We see a significant performance degrade when exclusively using ground-truth trajectories on forecasting task, while comparable performance on state tracking. We hypothesis for long-horizon task like Burgers' equation forecasting, the model's prediction will be largely deviated from ground truth, while for short-horizon task like state estimation ($H=1$), the model prediction is close to ground truth. The different levels of train-test mismatch will decide if model-generated trajectories are necessary for finetuning \model.

\begin{table*}[ht!]
    \centering
    \caption{Ablation study of the choice of finetuning on model-generated trajectories or ground-truth trajectories.}
    \label{tab:ab}
    \resizebox{0.96\textwidth}{!}{
    \begin{tabular}{lcccc}
        \toprule
        \multirow{2}{*}{\textbf{Method}} &  \multicolumn{3}{c}{\textbf{Burgers' Equation}} &\multicolumn{1}{c}{\textbf{State Estimation}} \\
        & NFE & RMSE $\downarrow$ & Energy Score $\downarrow$ & $10\log_{10}\text{MSE}$ $\downarrow$ \\
        \midrule
        \model (finetuned from model-generated trajectories) & 1 & $0.239$ & $0.101$ & $10.12$   \\
        \model (finetuned from ground-truth trajectories) & 1 & $0.251$  & $0.108$  & $10.11$   \\\hline
    \end{tabular}
    }
\end{table*}

\section{Visualization}
We provide a visualization of maze planning (Figure~\ref{fig:maze}) and smoke control (Figure~\ref{fig:smoke}) using pretrained diffusion model and finetuned sequential diffusion model. 

\begin{figure*}[ht!]
    \centering
    \includegraphics[width=1\linewidth,
    trim=0 80 0 0,]{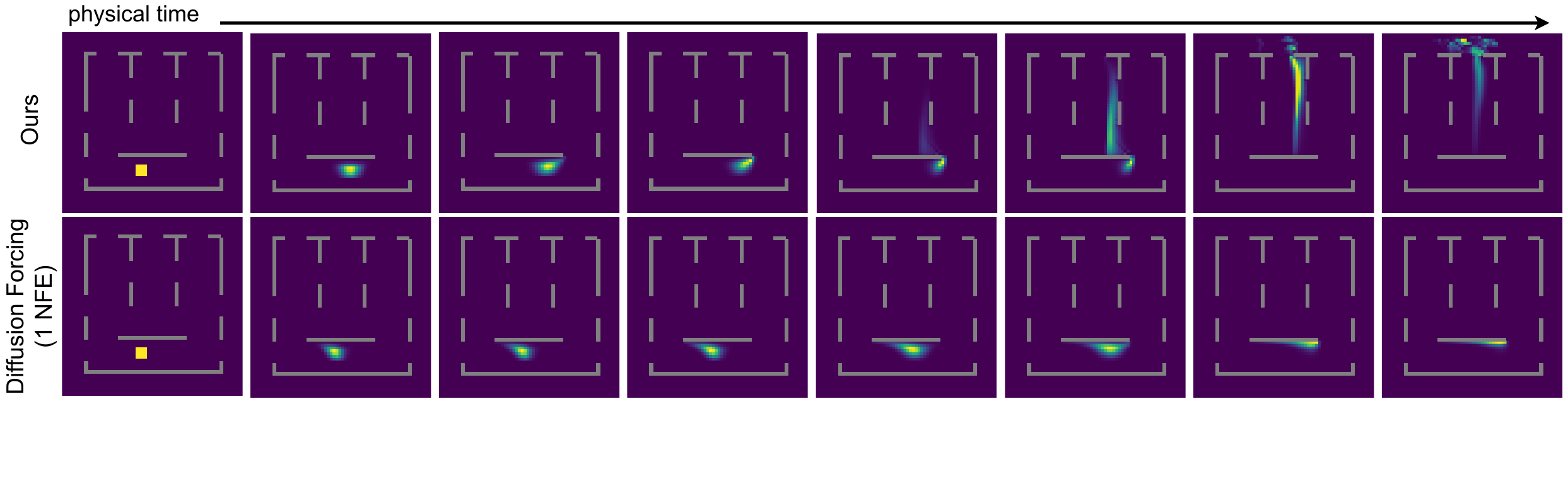}
    \caption{\textbf{Smoke Control.} With the same NFE, Sequential Diffusion Forcing effectively controls the smoke to reach the target exit, whereas Diffusion Forcing fails to circumvent the bottom obstacle.
    }
    \label{fig:smoke}
\end{figure*}

\begin{figure*}[ht!]
    \centering
    \includegraphics[width=1\linewidth]{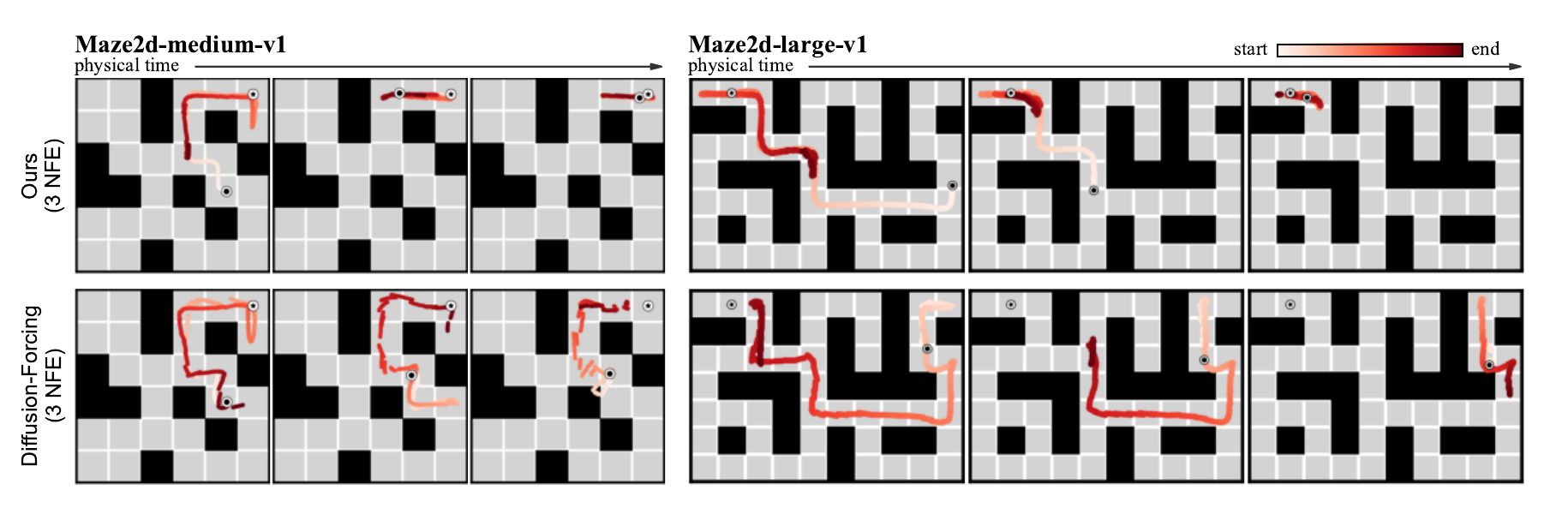}
    \caption{\textbf{Maze Planning.} With the same NFE, Sequential Diffusion Forcing can utilize the previous plans to effectively reach the target, while Diffusion Forcing fails drastically with small NFE.
    }
    \label{fig:maze}
\end{figure*}

\clearpage

\end{document}